\newcommand\Rr{\mathbb{R}}
\newcommand\R{\mathbb{R}}
\newcommand{\beq}[1]{\begin{equation}\label{#1}}
\newcommand{\eeq}{\end{equation}}
\newcommand{\dgro}[2]{d_{\mathcal G\mathcal H}({#1},{#2})}
\def\compcirc {\mbox{\hspace{.05cm}}\raisebox{.04cm}{\tiny  {$\circ$ }}}
\newcommand\commentout[1]{\marginpar{\tiny $\backslash$commentout}}
\newcommand\dmtwo[1]{\left(\begin{smallmatrix}
			0 & {#1}\\
     			{#1} & 0
                     \end{smallmatrix}\right)}
\newtheorem{Theorem}{Theorem}[section]
\newtheorem{Definition}{Definition}[section]
\newtheorem{Remark}{Remark}[section]
\newtheorem{Example}{Example}[section]
\newtheorem{Proposition}{Proposition}[section]
\title{Persistent Clustering and a Theorem of J. Kleinberg}
\author{Gunnar Carlsson   and   Facundo M\'emoli}
\thanks{This work is supported by DARPA grant HR0011-05-1-0007.} \address{Department of Mathematics, Stanford University,
Stanford, CA 94305, USA.}
\email{$\{$gunnar,memoli$\}$@math.stanford.edu}
\date{August 15, 2008}
\keywords{Clustering, hierarchical clustering, persistent topology, categories, functoriality,Gromov-Hausdorff distance.}
\subjclass[2000]{Primary 62H30; Secondary 91C20}
\urladdr{http://comptop.stanford.edu/ }
\begin{document}

\begin{abstract} 
We construct a framework for studying clustering algorithms, which
includes two key ideas: {\em persistence} and {\em functoriality}.
The first encodes the idea that the output of a clustering scheme
should carry a multiresolution structure, the second the idea that one
should be able to compare the results of clustering algorithms as one
varies the data set, for example by adding points or by applying
functions to it.  We show that within this framework, one can prove a
theorem analogous to one of J. Kleinberg \cite{kleinberg}, in which
one obtains an existence and uniqueness theorem instead of a
non-existence result.  We explore further properties of this unique
scheme, stability and convergence are established.
\end{abstract} 

\maketitle


\section{Introduction}
Clustering techniques play a very central role in various parts of
data analysis.  They can give important clues to the structure of data
sets, and therefore suggest results and hypotheses in the underlying
science.  There are many interesting methods of clustering available,
which have been applied to good effect in dealing with many datasets
of interest, and they are regarded as important methods in exploratory
data analysis.

Despite being one of the most commonly used tools for unsupervised
exploratory data analisys and despite its and extensive literature
very little is known about the theoretical foundations of clustering
methods.

The general question of which methods are ``best", or
most appropriate for a particular problem, or how significant a
particular clustering is has not been addressed as frequently.  One
problem is that many methods involve particular choices to be made at
the outset, for example how many clusters there should be, or the
value of a particular thresholding quantity.  In addition, some
methods depend on artifacts in the data, such as the particular order
in which the elements are listed.  In \cite{kleinberg}, J. Kleinberg
proves a very interesting impossibility result for the problem of even
defining a clustering scheme with some rather mild invariance
properties.  He also points out that his results shed light on the
trade-offs one has to make in choosing clustering algorithms.  In this
paper, we produce a variation on this theme, which we believe also has
implications for how one thinks about and applies clustering algorithms.

In addition, we study the precise quantitative (or metric) stability
and convergence/consitency of one particular clustering scheme which
is characterized by one of our results. 

We summarize the two main points in our approach.

\medskip
\noindent {{\bf Persistence:} We believe that the output of clustering algorithms
shouldn't be a single set of clusters, but rather a more structured
object which encodes ``multiscale" or ``multiresolution" information
about the underlying dataset.  The reason is that data can often
intrinsically possess structure at various different scales, as in
Figure \ref{multiscale} below. Clustering techniques should reflect
this structure, and provide methods for representing and analyzing it.

	\begin{figure}[htb] \centering
	\includegraphics[width=.75\linewidth]{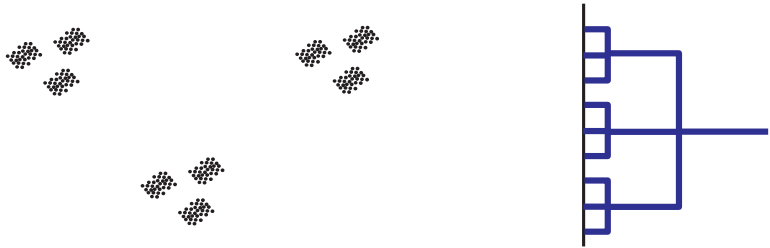}
	\caption{Dataset with multiscale structure and its
	corresponding dendrogram.}  \label{multiscale} \end{figure}

\noindent Ideally, users should be presented with a readily
computable and presentable object which will give him/her the option
of choosing the proper scale for the analysis, or perhaps interpreting
the multiscale invariant directly, rather than being asked to choose a
scale or choosing it for him/her. It is widely accepted that
clustering is ultimately itself a tool for exploratory data analysis,
\cite{ulrike_general}. In some sense, it is therefore totally acceptable to 
provide this multiscale invariant, whenever available and let the user
pick different scale thresholds that will yield different partitions
of the data. Once we accept this, we can concentrate on answering
theoretical questions regarding schemes that output this kind of
information. Our analysis will not, however, rule out clustering
methods that provide a one-scale view of the data, since, formally,
one can consider a such a scheme as one that at all scales gives the
same information, cf. Example \ref{example:scale-free}

We choose a particular way of representing this multiscale
information, we use the formalism of
\emph{persistent sets}, which is introduced in Section
\ref{persistence}, Definition \ref{def:persistent-sets}. The idea of showing the multiscale clustering view
of the dataset is widely used in Gene expression data analysis and it
takes the form of \emph{dendrograms}.}

\medskip
\noindent {{\bf Functoriality:} As our replacement for the
constraints discussed in \cite{kleinberg}, we will use instead the
notion of {\em functoriality} which has been a very useful framework
for the discussion of a variety of problems within mathematics over
the last few decades. For a discussion of categories and functors, see
\cite{maclane}.  Our idea is that clusters should be viewed as the
stochastic analogue of the mathematical concept of {\em path
components}.  Recall (see, e.g. \cite{munkres}) that the path
components of a topological space $X$ are the equivalence classes of
points in the space under the equivalence relation $\sim _{path}$,
where, for $x,y \in X$, we have $x \sim _{path} y$ if and only if
there is a continuous map $\varphi: [0,1] \rightarrow X$ so that
$\varphi (0) = x$ and $\varphi (1) = y$.  In other words, two points
in $X$ are in the same path component if they are connected by a
continuous path in $X$.  This set of components is denoted by $\pi _0
(X)$.  The assignment $X \rightarrow \pi _0 (X)$ is said to be
functorial, in that given a continuous map $f :X \rightarrow Y$
(\emph{morphism} of topological spaces), there is a natural map of
sets $\pi _0 (f) : \pi _0(X) \rightarrow \pi _0 (Y)$, which is defined
by requiring that $\pi _0 (f)$ carries the path component of a point
$x \in X$ to the path component of $f(x) \in Y$.  This notion has been
critical in many aspects of geometry; it provides the basis for the
methods of organizing geometric objects combinatorially which is
referred to as combinatorial or simplicial topology.  

\smallskip
The input to clustering algorithms is not, of course, a topological
space.  Rather, it is typically {\em point cloud data}, finite sets of
points lying in a Euclidean space of some dimension, or perhaps in
some other metric space, such as a tree or a collection of words in
some alphabet equipped with a metric.  We will therefore think of it
as a finite metric space (see \cite{munkres} for a discussion of
metric spaces).  There is a natural notion of what is meant by a map
of metric spaces, which one can think of as loosely analogous to
continuity. This notion has been used in other contexts in the past,
see for example
\cite{isbell}. Similarly, we define a natural notion of what is meant
by a morphism of the persistent sets defined above, and require
\emph{functoriality} for the clustering algorithms we consider in terms
of these notions of morphisms. For the time being the reader not
familiar with the concept, can think of functoriality as a notion of
coarse stability/consistency.  By varying the richness of the class of
morphisms between metric spaces one can control how stringent are the
conditions imposed on the clustering algorithms. Functoriality can
therefore be interpreted as a notion of \emph{coarse stability} of
these clustering algorithms.} 

\medskip
In \cite{mccullaugh}, the idea of using categorical and functorial ideas in statistics has been proposed as a formalism for defining what is meant by statistical models.   One aspect of our work is to show that the same ideas, which are so powerful in many other aspects of mathematics, can be used to understand the nature of algorithms for accomplishing statistical tasks.

\medskip
 We summarize the main features of our point of view.
	
	(a) It makes explicit the notion of multiscale representation
	of the set of clusters.

	(b) By varying the degree of functoriality (i.e. by
	considering different notions of morphism on the domain of
	point cloud data) one can reason about the existence and
	properties of various schemes.  We illustrate this possibility
	in Section \ref{results}. In particular, are able to prove a
	\emph{uniqueness} theorem for clustering algorithms with one
	natural notion of functoriality.

	(c) Beyond the conceptual advantages cited above,
	functoriality can be directly useful in analyzing datasets.
	The property can be used to study qualitative geometric
	properties of point cloud data, including more subtle
	geometric information than clustering, such as presence of
	``loopy" behavior or higher dimensional analogues.  See
	e.g. \cite{mumford} for an example of this point of view.  We
	will also present an example in Subsection \ref{intrinsic}.
	In addition, the functoriality property can be used to analyze
	functions on the datasets, by studying the behavior of
	sublevel sets of the function under clustering.  One version
	of this idea builds probabilistic versions of the {\em Reeb
	graph}.  See \cite{mapper} for a number of examples of how
	this can work.

Other, different, notions of stability of clustering schemes have
appeared in the literature, see \cite{raghavan,sober} and references
therein. We touch upon similar concepts in Section \ref{sec:stab}.

The organization of the paper is as follows. In Section
\ref{persistence} we introduce the main objects that model the output
of clustering algorithms together with some important
examples. Section \ref{cats} introduces the concepts of categories and
functors, and the idea of \emph{functoriality} is discussed. We
present our main characterization results in Section
\ref{results}. The quantitative study of stability and consistency is presented in Section \ref{sec:stab}. Further applications of the concept of functoriality are discussed in Section \ref{sec:zigzags} and concluding remarks are presented in Section \ref{various}.

\section{Persistence}\label{persistence}
In this section we define the objects which are the output of the
clustering algorithms we will be working with.  These objects will encode
the notion of ``multiscale" or ``multiresolution" sets discussed in
the introduction.

Let ${\mathcal P}(X)$ denote the set of partitions of the (finite) set
$X$.

\begin{Definition}\label{def:persistent-sets}
A {\em persistent set} is a pair $(X, \theta)$, where $X$ is a finite
set, and $\theta$ is a function from the non-negative real line $[0,+
\infty)$ to ${\mathcal P}(X)$  so that the following
properties hold.
\begin{enumerate}
\item{ If $r \leq s$, then $\theta (r)$ refines $\theta (s)$.  }
\item{For any $r$, there is a number $\epsilon > 0$ so that $\theta (r^{\prime}) = \theta (r)$ for all $r^\prime\in[r,r+\epsilon]$.} 
\end{enumerate}
If in addition there exists $t>0$ s.t. $\theta(t)$ consists of the
single block partition for all $r\geq t$, then we say that
$(X,\theta)$ is a \emph{dendrogram}.\footnote{In the paper we will be
using the word dendrogram to refer both to the object defined here and
to the standard graphical representation of them.}
\end{Definition}
\noindent The intuition is that the set of blocks of the partition $\theta (r)$ should be regarded as $X$ viewed at scale $r$.  

\begin{Example}\label{rips}{\em  Let $(X,d)$ be a finite metric space.  Then we can associate to $(X,d)$ the persistent set whose underlying set is $X$, and where blocks of  the partition $\theta (r)$ consist of the  equivalence classes under the equivalence relation $\sim _r$, where $x \sim _r x^{\prime}$ if and only if there is a sequence $x_0, x_1, \ldots , x_t \in X$ so that $x_0 = x, x_t = x^{\prime}$, and $d(x_i, x_{i+1}) \leq r$ for all $i$.}  
\end{Example}

\begin{Example}\label{example:scale-free}
{\em A more trivial example is one in which $\theta (r) $ is constant,
i.e. consists of a single partition.  This is the scale free notion of
clustering. Examples are $k$-means clustering and spectral
clustering.}
\end{Example}

\begin{Example}{\em \label{hierarchical}
Here we consider the family of Agglomerative Hierarchical clustering
techniques, \cite{clusteringref}. We (re)define these by the recursive
procedure described next. Let $X=\{x_1,\ldots,x_n\}$ and let $\mathcal
L$ denote a family of \emph{linkage functions}, i.e. functions which
one uses for defining the distance between two clusters. Fix $l\in
{\mathcal L}$. For each $R>0$ consider the equivalence relation
$\sim_{l,R}$ on blocks of a partition $\Pi\in{\mathcal P}(X)$, given
by \mbox{${\mathcal B}\sim_{l,R} {\mathcal B}'$} if and only if there is a
sequence of blocks ${\mathcal B}={\mathcal B}_1,\ldots,{\mathcal
B}_s={\mathcal B}'$ in $\Pi$ with $l({\mathcal B}_k,{\mathcal
B}_{k+1})\leq R$ for $k=1,\ldots,s-1$. Consider the sequences
$r_1,r_2,\ldots\in [0,\infty)$ and $\Theta_1,\Theta_2,\ldots\in
\mathcal{P}(X)$ given by $\Theta_1:=\{x_1,\ldots,x_n\}$ and for $i\geq
1$, $\Theta_{i+1}=\Theta_i/\sim_{l,r_i}$ where $r_i:=
\min\{l({\mathcal B},{\mathcal B}'),\,{\mathcal B},{\mathcal B}'\in\Theta_i,\,{\mathcal B}\neq {\mathcal B}'\}.$
Finally, we define $\theta^l:[0,\infty)\rightarrow \mathcal{P}(X)$ by
$r\mapsto
\theta^l(r):=\Theta_{i(r)}$ where $i(r):=\max\{i|r_i\leq
r\}$. Standard choices for $l$ are \emph{single linkage}: $l({\mathcal
B},{\mathcal B}')=\min_{x\in {\mathcal B}}\min_{x'\in {\mathcal
B}'}d(x,x');$
\emph{complete linkage}
$l({\mathcal B},{\mathcal B}')=\max_{x\in {\mathcal B}}\max_{x'\in
{\mathcal B}'}d(x,x');$ and \emph{average linkage}: $l({\mathcal
B},{\mathcal B}')=\frac{\sum_{x\in {\mathcal B}}\sum_{x'\in {\mathcal
B}'}d(x,x')}{\#{\mathcal B}
\cdot \#{\mathcal B}'}.$ It is easily verified that the notion discussed in
Example \ref{rips} is \emph{equivalent} to $\theta^l$ when $l$ is the
single linkage function. Note that, unlike the usual definition of
agglomerative hierarchical clustering, at each step of the inductive
definition we allow for more than two clusters to be merged.}
\end{Example}

\noindent   We will be using the persistent sets which arise out of Example \ref{rips}.  
It is of course the case that the persistent set carries much more
information than a single set of clusters.  One can ask whether it
carries too much information, in the sense that either (a) one cannot
obtain useful interpretations from it or (b) it is computationally
intractable.  We claim that it can usually be usefully interpeted, and
can be effectively and efficiently computed.  One can observe this as
follows.  Since there are only a finite number of partitions of $X$, a
persistent set ${\mathcal Q}$ gives a partition of $\mathbb{R}^+$ into a
finite collection ${\mathcal I}$ of intervals of the form
$[r,r^{\prime})$, together with one interval of the form $[r, +
\infty)$.  For each such interval, every number in the interval
corresponds to the same partition of $X$.  

We claim that knowledge of
these intervals is a key piece of information about the persistent
sets arising from Examples \ref{rips} and \ref{hierarchical} above.
The reason is that long intervals in ${\ I}$ correspond to large
ranges of values of the scale parameter in which the associated
cluster decomposition doesn't change.  One would then regard the
partition into clusters corresponding to that interval as likely to
represent significant structure present at the given range of scales.
If there is only one long interval (aside from the infinite interval
of the form $[r, +
\infty)$) in ${\mathcal I}$, then one is led to believe that there is only
one interesting range of scales, with a unique decomposition into
clusters.  However, if there are more that one long interval, then it
suggests that the object has significant multiscale behavior, see
Figure \ref{multiscale}.  Of course, the determination of what is
``long" and what is ``short" will be problem dependent, but choosing
thresholds for the length of the intervals will give definite ranges
of scales.  As for the computability, the persistent sets associated
to a finite metric space can be readily computed using (conveniently
modified) hierarchical clustering techniques, or the methods of
persistent homology (see
\cite{persistence}).


\section{Categories, functors and functoriality}\label{cats}
\subsection{Definitions and Examples}
In this section, we will give a brief description of the theory of
categories and functors, which will be the framework in which we state
the constraints we will require of our clustering algorithms.  An
excellent reference for these ideas is \cite{maclane}.

Categories are useful mathematical constructs that encode the nature
of certain objects of interest together with a set of
admissible/interesting/useful maps between them. This formalism is
extremely useful for studying classes of mathematical objects which
share a common structure, such as sets, groups, vector spaces, or
topological spaces.  The definition is as follows.

\begin{Definition} A category $\underline{C}$ consists of 
\begin{itemize}

	\item{ A collection of objects $ob(\underline{C})$ (e.g.
	sets, groups, vector spaces, etc.)}

	\item{ For each pair of objects $X,Y \in ob(\underline{C})$, a
	set\\
	 $Mor_{\underline{C}} (X,Y)$, the morphisms from $X$ to $Y$
	(e.g. maps of sets from $X$ to $Y$, homomorphisms of groups
	from $X$ to $Y$, linear transformations from $X$ to $Y$,
	etc. respectively)}
	
	\item{ Composition operations:\\ $\compcirc :
	Mor_{\underline{C}} (X,Y) \times Mor_{\underline{C}} (Y,Z)
	\rightarrow Mor_{\underline{C}} (X,Z)$, corresponding to
	composition of set maps, group homomorphisms, linear
	transformations, etc.  }
	
	\item{For each object $X \in \underline{C}$, a distinguished element $id_X\in Mor_{\underline{C}}(X,X)$}
\end{itemize}
The composition is assumed to be associative in the obvious sense, and for any $f \in Mor_{\underline{C}} (X,Y)$, it is assumed that $id_Y \compcirc f = f$ and $f \compcirc id_X = f$. 
\end{Definition} 
Here are the relevant examples for this paper. 
\begin{Example}
{\em We will construct three categories $\underline{{\mathcal M}}^{iso}$, $\underline{{\mathcal M}}^{mon}$, and $\underline{{\mathcal M}}^{gen}$, whose
collections of objects will all consist of the collection of finite
metric spaces.  Let $(X,d_X)$ and $(Y, d_Y)$ denote finite metric
spaces.  A set map $f: X \rightarrow Y$ is said to be {\em distance
non increasing} if for all $x,x^{\prime} \in X$, we have
$d_Y(f(x),f(x^{\prime})) \leq d_X(x,x^{\prime})$.  It is easy to check
that composition of distance non-increasing maps are also distance
non-increasing, and it is also clear that $id_X$ is always distance
non-increasing.  We therefore have the category $\underline{{\mathcal
M}}^{gen}$, whose objects are finite metric spaces, and so that for
any objects $X$ and $Y$, $Mor_{\underline{{\mathcal M}}^{gen}}(X,Y)$ is
the set of distance non-increasing maps from $X$ to $Y$, cf.
\cite{isbell} for another use of this class of maps.  We say that a distance non-increasing map is {\em
monic} if it is an inclusion as a set map.  It is clear compositions
of monic maps are monic, and that all identity maps are monic, so we
have the new category $\underline{{\mathcal M}}^{mon}$, in which
$Mor_{\underline{{\mathcal M}}^{mon}}(X,Y)$ consists of the monic distance
non-increasing maps.  Finally, if $(X,d_X)$ and $(Y,d_Y)$ are finite
metric spaces, $f:X
\rightarrow Y$ is an {\em isometry} if $f$ is bijective and $d_Y(f(x),
f(x^{\prime})) = d_X(x,x^{\prime})$ for all $x$ and $x^{\prime}$.  It
is clear that as above, one can form a category $\underline{{\mathcal
M}}^{iso}$ whose objects are finite metric spaces and whose morphisms
are the isometries.  It is clear that we have inclusions 
\begin{equation}\label{eq:inclu-cats}
\underline{{\mathcal M}}^{iso} \subseteq \underline{{\mathcal M}}^{mon}
\subseteq \underline{{\mathcal M}}^{gen} 
\end{equation}
 of subcategories (defined as in \cite{maclane}).  Note that although
 the inclusions are bijections on object sets, they are proper
 inclusions on morphism sets, i.e. they are not in general surjective.
 }
\end{Example}
We will also construct a category of \emph{persistent sets}. 

\begin{Example}\label{example:P}
{\em Let $(X, \theta), (Y, \eta)$ be persistent sets.  For any
partition $\Pi$ of a set $Y$, and any set map $f: X \rightarrow Y$, we
define $f^*(\Pi)$ to be the partition of $X$ whose blocks are the sets
$f^{-1}({\mathcal B})$, as ${\mathcal B}$ ranges over the blocks of $\Pi$.  A
map of sets $f: X \rightarrow Y$ is said to be {\em persistence
preserving} if for each $r \in \mathbb{R}$, we have that $\theta (r)$
is a refinement of $ f^*(\eta(r))$.  It is easily verified that the
composite of persistence preserving maps is persistence preserving,
and that any identity map is persistence preserving, and it is
therefore clear that we may define a category $\underline{{\mathcal P}}$
whose objects are persistent sets, and where $Mor_{\underline{{\mathcal
P}}}((X,\theta), (Y, \eta))$ consists of the set maps from $X$ to $Y$
which are persistence preserving. A simple example is shown in Figure
\ref{fig:pers-preserv}.  }

\begin{figure}[htb]
	\centering
	\includegraphics[width=1\linewidth]{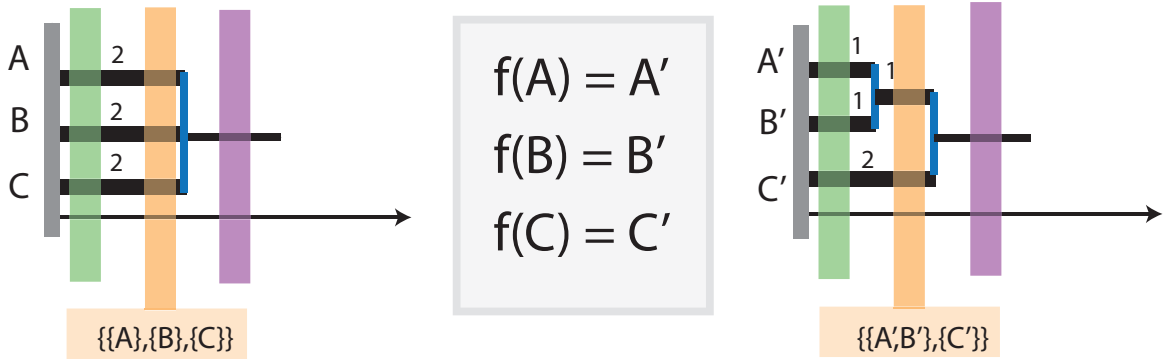}
	\caption{Two persistent sets $(X,\theta)$ and $(Y,\eta)$
	represented by their dendrograms. On the left one defined in
	the set $X = \{A,B,C\}$ and on the right one defined on the
	set $Y=\{A',B',C'\}$. Consider the given set map
	$f:X\rightarrow Y$. Then we see that $f$ is persistence
	preserving since for each $r\geq 0$, the partition $\theta(r)$
	is a refinement of $f^*(\eta(r))$. Indeed, there are three
	interesting ranges of values of $r$. Pick for example $r$ like
	in the orange shaded area: $r\in[1,2)$. Then $\eta(r) =
	\{\{A',B'\},\{C'\}\}$ and hence $f^*(\eta(r)) =
	\{f^{-1}(\{A',B'\}),\{f^{-1}(C')\}\} = \{\{A,B\},\{C\}\}$
	which is indeed refined by
	$\theta(r)=\{\{A\},\{B\},\{C\}\}$. One proceeds similary for
	the other two cases.}  \label{fig:pers-preserv}
\end{figure}
\end{Example}
 We next introduce the key concept in our discussion, that of a
 \emph{functor}.  We give the formal definition first.
\begin{Definition}\label{def:functor}
Let $\underline{C}$ and $\underline{D}$ be categories.  Then a {\em
functor} from $\underline{C}$ to $\underline{D}$ consists of

\begin{itemize}

	\item{A map of sets $F:ob(\underline{C}) \rightarrow
	ob(\underline{D})$} 
	
	\item{For every pair of objects $X,Y \in \underline{C}$ a map
	of sets $\Phi(X,Y):Mor_{\underline{C}}(X,Y) \rightarrow
	Mor_{\underline{D}}(FX,FY)$ so that

	\begin{enumerate}

		\item{ $\Phi(X,X)(id_X) = id_{F(X)}$ for all $X \in
		ob(\underline{C})$}

		\item{$ \Phi (X,Z)(g \compcirc f) = \Phi (Y,Z)(g)
		\compcirc \Phi (X,Y)(f)$ for all $f \in Mor
		_{\underline{C}}(X,Y)$ and $g \in Mor
		_{\underline{C}}(Y,Z)$}

	\end{enumerate} }
\end{itemize}
\end{Definition}
\begin{Remark}
In the interest of clarity, we often refer to the pair $(F,\Phi)$ as a
single letter $F$. See diagram (\ref{eq:diagram}) in Example
\ref{ex:key} below for an example.
\end{Remark}
A morphism $f:X \rightarrow Y$ which has a two sided inverse $g:
Y\rightarrow X$, so that $f \compcirc g = id_Y$ and $g \compcirc f =
id_X$, is called an {\em isomorphism}.  Two objects which are
isomorphic are intuitively thought of as ``structurally
indistinguishable" in the sense that they are identical except for
naming or choice of coordinates.  For example, in the category of
sets, the sets $\{ 1,2,3 \}$ and $\{ A,B,C \}$ are isomorphic, since
they are identical except for choice made in labelling the elements.
We illustrate this definition with some examples.
\begin{Example}{\bf (Forgetful functors)} {\em When one has two categories $\underline{C}$ and $\underline{D}$, where the objects in $\underline{C}$ are objects in $\underline{D}$  equipped with some additional structure and the morphisms in $\underline{C}$ are simply the morphisms in $\underline{D}$ which preserve that structure, then we obtain the ``forgetful functor" from $\underline{C}$ to $\underline{D}$, which carries the object in $\underline{C}$ to the same object in $\underline{C}$, but regarded without the additional structure.  For example, a group can be regarded as a set with the additional structure of multiplication and inverse maps, and the group homomorphisms are simply the set maps which respect that structure.  Accordingly, we have the functor from the category of groups to the category of sets which ``forgets the multiplication and  inverse".  Similarly, we have the forgetful functor from $\underline{{\mathcal P}}$ to the category of sets, which forgets the  presence of $\theta$ in the persistent set $(X, \theta)$.  }
\end{Example}
\begin{Example} {\em  
The inclusions $ \underline{{\mathcal M}}^{iso} \subseteq \underline{{\mathcal
M}}^{mon} \subseteq \underline{{\mathcal M}}^{gen} $ are both functors.  }
\end{Example}
Any given clustering scheme is a procedure $F$ which takes as input a
finite metric space $(X,d_X)$, that is, an object in $ob(\underline{\mathcal
M}^{gen})$, and delivers as output a persistent set, that is, an
object in $ob(\underline{\mathcal P})$. The concept of
\underline{functoriality} refers to the additional condition
that the clustering procedure maps a pair of input objects into a pair
of output objects in a manner which is consistent/stable with respect
to the morphisms attached to the input and output spaces. When this
happens, we say that the clustering scheme is \underline{functorial}.
This notion of consistency/stability is made precise in Definition
\ref{def:functor} and described by diagram (\ref{eq:diagram}).

Now, the idea is to regard clustering algorithms (that output a
persistent set) as functors. Assume for instance we want to consider
``stability'' to all distance non-increasing maps. Then the correct
category of inputs (finite metric spaces) is $\underline{{\mathcal
M}}^{gen}$ and the category of outputs is $\underline{\mathcal P}$.
According to Definition \ref{def:functor} in order to view a
clustering scheme as a functor we need to specify (1) how it maps
objects of $\underline{{\mathcal M}}^{gen}$ (finite metric spaces)
into objects of $\underline{\mathcal P}$ (persistent sets), and (2)
how a valid morphism/map $f:(X,d_X)\rightarrow (Y,d_Y)$ between two
objects $(X,d_X)$ and $(Y,d_Y)$ in the input space/category
$\underline{{\mathcal M}}^{gen}$ induce a map in the output category
$\underline{\mathcal P}$, see diagram (\ref{eq:diagram}) below.

We exemplify this through the construction of the key example for this
paper.

\begin{Example} \label{ex:key}{\em We define a functor 
$${\mathcal R}^{gen}: \underline{{\mathcal M}}^{gen} \rightarrow
\underline{{\mathcal P}}$$
as follows.  For a finite metric space $(X, d_X)$, we define ${\mathcal
R}^{gen}(X,d_X)$ to be the persistent set $(X, \theta ^{d_X})$, where
$\theta ^{d_X}(r)$ is the partition associated to the equivalence
relation $\sim _r$ defined in Example \ref{rips}.  This is clearly an
object in $\underline{{\mathcal P}}$.  We also define how ${\mathcal
R}^{gen}$ acts on maps $f : (X,d_X) \rightarrow (Y,d_Y)$: The value of
${\mathcal R}^{gen}(f)$ is simply the set map $f$ regarded as a
morphism from $(X,\theta ^{d_X})$ to $(Y,\theta ^{d_Y})$ in
$\underline{{\mathcal P}}$.  That it is a morphism in $\underline{{\mathcal
P}}$ is easy to check. This functorial construction is represented through the diagram below:
\begin{equation}\label{eq:diagram}
\xymatrix{
(X,d_X) \ar[r]^{{\mathcal R}^{gen}} \ar[d]_{f} & (X,\theta^{d_X}) \ar[d]^{{\mathcal R}^{gen}(f)} \\
(Y,d_Y) \ar[r]^{{\mathcal R}^{gen}} & (Y,\theta^{d_Y})} 
\end{equation}
where ${\mathcal R}^{gen}(f)$ is persistence preserving.}
\end{Example}

\begin{Example}{ \em  By restricting ${\mathcal R}^{gen}$ to the subcategories 
$\underline{{\mathcal M}}^{iso} \mbox{ and } \underline{{\mathcal M}}^{mon} $,
we obtain functors\\ ${\mathcal R}^{iso}: \underline{{\mathcal M}}^{iso}
\rightarrow \underline{{\mathcal P}}$ and ${\mathcal R}^{mon}:
\underline{{\mathcal M}}^{mon} \rightarrow \underline{{\mathcal P}}$.}
\end{Example}
\begin{Example}\label{scaling} {\em Let $\lambda $ be any positive real number.  Then we define a functor $\sigma _{\lambda} : \underline{{\mathcal M}}^{gen}
\rightarrow \underline{{\mathcal M}}^{mon}$ on objects by 
$$  \sigma _{\lambda}(X,d_X) = (X,\lambda d_X)
$$
and on morphisms by $\sigma _{\lambda} (f) = f$.  One easily verifies that if $f$ satisfies the conditions for being a morphism in $\underline{{\mathcal M}}^{gen}$ from $(X,d_X)$ to $(Y,d_Y)$, then it readily satisfies the conditions to be a morphism from $(X,\lambda d_X)$ to $(Y, \lambda d_Y)$.  Similarly, we define a functor $s_{\lambda} : \underline{{\mathcal P}} \rightarrow \underline{{\mathcal P}}$ by setting $s_{\lambda}(X, \theta ) = (X, \theta ^{\lambda})$, where 
$\theta ^{\lambda}(r) = \theta (\frac{r}{\lambda})$. } 
\end{Example}

In Section \ref{sec:results} we will be showing our main results. We
will now have a brief disgression to discuss other situations in
which, in our opinion, the concept of functoriality can be useful.

\subsection{Intrinsic Value of Functoriality}\label{intrinsic}
\noindent By studying functorial methods of clustering, it is possible to
 recover qualitative aspects of the geometric structure of a dataset.
 We illustrate this idea with a ``toy" example.  We suppose that we
 have a point cloud data which is concentrated around the unit circle.
 We consider the projection of the data on to the $x$-axis, and cover
 the axis with two (overlapping) intervals $U$ and $V$, pictured on
 the left in Figure \ref{hocolim} below as being red and yellow, with
 orange intersection.  By considering those portions of the dataset
 whose $x$-coordinate lie in $U$ and $V$ respectively, we obtain the
 red and yellow subsets of the dataset pictured on the right below.
 Their intersection is pictured as orange, and the arrows indicate
 that we have inclusions of the intersection into each of the pieces.
\begin{figure}[htb]
	\centering \includegraphics[width=1\linewidth]{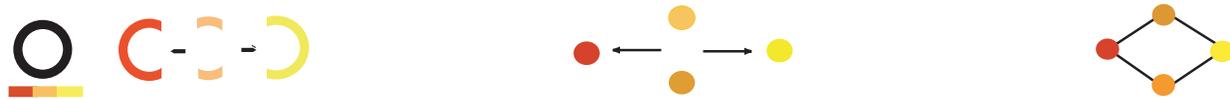}
	\caption{\emph{Left}: Covering of a circle by two
	intervals. \emph{Center}: Corresponding diagram of
	components. \emph{Right}: Homotopy colimit of diagram in
	center figure.}  \label{hocolim}
\end{figure}
\noindent Next, we note that if we are dealing with a functorial 
clustering scheme, and cluster each of these subsets, we obtain the
diagram of clusters in the center of Figure \ref{hocolim}.  This is
now a very simple combinatorial object.

\noindent There is a topological construction known as the {\em homotopy colimit}, which given any diagram of sets of any shape reconstructs a simplicial set (a slightly more flexible version of the notion of simplicial complex), and in particular a space.  To first approximation, one builds a vertex for every element in any set in the diagram, and an edge between any two elements which are connected by a map in the diagram, and then attaches higher order simplices according to a well defined procedure.  In the case of the diagram above, this constructs the space given in the rightmost part of Figure \ref{hocolim} . 

\noindent The details of the theory of simplicial sets and homotopy colimits are beyond the scope of this paper.  A thorough exposition is given in \cite{bk}. 

\noindent Functoriality is also quite useful when one is interested in studying the 
qualitative behavior of a real-valued function $f$ on a dataset, for
example the output of a density estimator.  Then it is useful to study
the set of clusters in sublevel and superlevel sets of $f$, and
understanding how the clusters behave under changes in the thresholds
can help one understand the presence of saddle points and higher index
critical points of the function. 

One example of this is two-parameter persistence constructions,
\cite{gsan}. In this case, there is more structure than just persistent sets (trees/dendrograms) as defined in this paper.

We will elaborate on another application of functoriality in Section
\ref{bootstrap}.

\section{Results}\label{results} \label{sec:results}

We now study different clustering algorithms using the idea of
functoriality.  We have 3 possible ``input'' categories ordered by
inclusion (\ref{eq:inclu-cats}). The idea is that studying
functoriality over a larger category will be more stringent/demanding
than requiring functoriality over a smaller one. We now consider
different clustering algorithms and study whether they are functorial
over our choice of the input category. We start by analyzing
functoriality over the least demanding one, $\underline{\mathcal
M}^{iso}$, then we prove a uniqueness result for functoriality over
$\underline{\mathcal M}^{gen}$ and finally we study how relaxing the
conditions imposed by the morphisms in $\underline{\mathcal M}^{gen}$,
namely, by restricting ourselves to the smaller but intermediate
category $\underline{\mathcal M}^{mon}$, we permit more functorial
clustering algorithms.

\subsection{Functorality over $\underline{{\mathcal M}}^{iso}$} 
This is the smallest category we will deal with.  The morphisms in
$\underline{{\mathcal M}}^{iso}$ are simply the bijective maps between
datasets which preserve the distance function.  As such, functoriality
of a clustering algorithms over $\underline{{\mathcal M}}^{iso}$ simply means
that the output of the scheme doesn't depend on any artifacts in the
dataset, such as the way the points are named or the way in which they
are ordered.  Here are some examples which illustrate the idea.
\begin{itemize}

	\item{The {\em $k$-means algorithm} (see \cite{clusteringref})
	is in principle allowed by our framework since
	$ob(\underline{\mathcal P})$ contains all constant persistent
	sets. However it is not functorial on any of our input
	categories.  It depends both on a paramter $k$ (number of
	clusters) and on an initial choices of means, and is not
	therefore dependent on the metric structure alone.}

	\item{{\em Agglomerative hierarchical clustering}, in standard
	form, as described for example in \cite{clusteringref}, begins
	with point cloud data and constructs a \emph{binary} tree (or
	dendrogram) which describes the merging of clusters as a
	threshold is increased.  The lack of functoriality comes from
	the fact that when a single threshold value corresponds to
	more than one data point, one is forced to choose an ordering
	in order to decide which points to ``agglomerate" first.  This
	can easily be modified by relaxing the requirement that the
	tree be binary. This is what we did in Example
	\ref{hierarchical} In this case, one can view these methods as
	functorial on $\underline{{\mathcal M}}^{iso}$, where the functor
	takes its values in arbitrary rooted trees.  It is understood
	that in this case, the notion of morphism for the output
	($\underline{\mathcal P}$) is simply isomorphism of rooted
	trees. In contrast, we see next that amongst these methods,
	when we impose that they be functorial over the larger (more
	demanding) category $\underline{\mathcal M}^{gen}$ then only
	one of them passes the test.\footnote{The result in Theorem
	\ref{thm:uniq} is actually more powerful in that it states
	that there is a unique functor from $\underline{\mathcal
	M}^{gen}$ to $\underline{\mathcal P}$ that satisfies certain
	natural conditions.} }

	\item{{\em Spectral clustering}. As described in
	\cite{ulrike_spectral}, typically, spectral methods consist of
	two different layers. They first define a laplacian matrix out
	of the dissimilarity matrix (given by $d_X$ in our case)} and
	then find eigenvalues and eigenvectors of this operator. The
	second layer is as follows: a natural number $k$ must be
	specified, a projection to $\R^k$ is performed using the
	eigenfunctions, and clusters are found by an application of
	the $k$-means clustering algorithm. Clearly, operations in the second layer
	will fail to be functorial as they do not depend on the metric
	alone. However, the procedure underlying in the first layer is
	clearly functorial on $\underline{\mathcal M}^{iso}$ as
	eigenvalue computations are changed by a permutation in a well
	defined, natural, way. 
\end{itemize}

\subsection{Functorality over $\underline{\mathcal M}^{gen}$: a uniqueness theorem}
In this section, as an example application of the conceptual framework
of functoriality, we will prove a theorem of the same flavor as the
main theorem of \cite{kleinberg}, except that we prove existence and
uniqueness on $\underline{\mathcal M}^{gen}$ instead of impossibility
in our context.

Before stating and proving our theorem, it is interesting to point out
why complete linkage and average linkage (agglomerative) clustering,
as defined in Example \ref{hierarchical} are not functorial on
$\underline{\mathcal M}^{gen}$. A simple example explains this: consider
the metric spaces $X=\{A,B,C\}$ with metric given by the edge lengths
$\{4,3,5\}$ and $Y=(A',B',C')$ with metric given by the edge lengths
$\{4,3,2\}$, as given in Figure
\ref{cexample}. Obviously the map $f$ from $X$ to $Y$ with $f(A)=A'$, 
$f(B)=B'$ and $f(C)=C'$ is a morphism in $\underline{\mathcal
M}^{gen}$. Note that for example for $r=3.5$ (shaded regions of the
dendrograms in Figure \ref{cexample}) we have that the partition of
$X$ is $\Pi_X = \{\{A,C\},B\}$ whereas the partition of $Y$ is $\Pi_Y
= \{\{A',B'\},C'\}$ and thus $f^*(\Pi_Y)=\{\{A,B\},\{C\}\}$. Therefore
$\Pi_X$ does not refine $f^*(\Pi_Y)$ as required by functoriality. The same
construction yields a counter-example for average linkage.
\begin{figure}[htb]
		\centering
		\includegraphics[width=.7\linewidth]{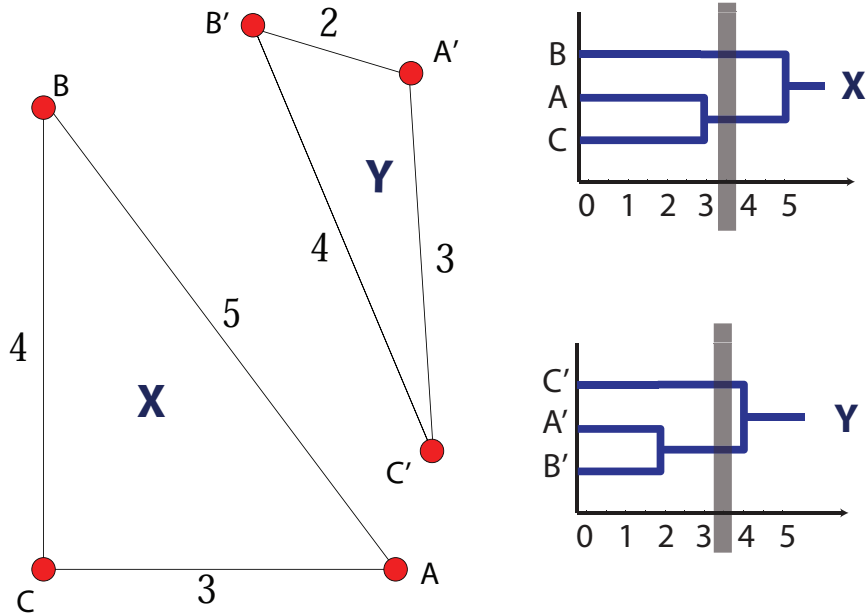}
		\caption{An example that shows why complete linkage
		fails to be functorial on $\underline{\mathcal M}^{gen}$.}
		\label{cexample}
\end{figure}
\begin{Theorem} \label{thm:uniq} Let $\Psi: \underline{{\mathcal M}}^{gen} \rightarrow \underline{{\mathcal P}} $
be a functor which satisfies the following conditions.
\begin{description}

	\item[(I)] {Let $\alpha : \underline{{\mathcal M}}^{gen} \rightarrow
	\underline{Sets}$ and $\beta :\underline{ {\mathcal P}}\rightarrow
	\underline{Sets}$ be the forgetful functors$ (X,d_X)
	\rightarrow X$ and $(X,\theta) \rightarrow X$, which forget
	the metric and partition respectively, and only ``remember"
	the underlying sets $X$.  Then we assume that $\beta \compcirc
	\Psi = \alpha$.  This means that the underlying set of the
	persistent set associated to a metric space is just the
	underlying set of the metric space. }

	\item[(II)] {For $\delta\geq 0$ let
	$Z(\delta)=(\{p,q\},\dmtwo{\delta})$ denote the two point
	metric space with underlying set $\{ p,q \}$, and where
	$\mbox{dist}(p,q) = \delta$.  Then $\Psi (Z(\delta))$ is the
	persistent set $(\{p,q\},\theta ^{Z(\delta)})$ whose
	underlying set is $\{p,q \}$ and where $\theta ^{Z(\delta)}
	(t) $ is the partition with one element blocks when $t <
	\delta$ and it is the partition with a single two point block
	when $t \geq \delta$.  }

	\item[(III)] {Given a finite metric space $(X,d_X)$, let
	$$\mbox{sep}(X) := \min_{x \neq
	x^{\prime}}d_X(x,x^{\prime}).$$ Write
	$\Psi(X,d_X)=(X,\theta^{\Psi})$, then for any $t <
	\mbox{sep}(X)$, the partition $\theta^{\Psi}(t)$ is the
	discrete partition with one element blocks.  }
\end{description}
Then $\Psi $ is equal to the functor ${\mathcal R}^{gen}$.  
\end{Theorem}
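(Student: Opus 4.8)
The plan is to show that $\Psi$ and $\mathcal{R}^{gen}$ agree on objects and on morphisms, the object statement being the real content. On morphisms the agreement is essentially formal: once we know $\Psi(X,d_X)=\mathcal{R}^{gen}(X,d_X)$ for all objects, then for a morphism $f$ condition (I) (which is an equality of functors, hence holds on morphisms too) says the underlying set map of $\Psi(f)$ is $f$, while the underlying set map of $\mathcal{R}^{gen}(f)$ is $f$ by its very definition; since a morphism in $\underline{\mathcal P}$ \emph{is} a persistence-preserving set map, two such morphisms with equal source, equal target, and equal underlying set map coincide. So I will concentrate on showing $\Psi(X,d_X)=(X,\theta^{\Psi})$ equals $\mathcal{R}^{gen}(X,d_X)=(X,\theta^{d_X})$; by (I) the underlying sets already match, so it remains to prove $\theta^{\Psi}(r)=\theta^{d_X}(r)$ for every $r\geq 0$, which I do by establishing the two refinement inclusions.

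For the inclusion that $\theta^{d_X}(r)$ refines $\theta^{\Psi}(r)$, it suffices by transitivity of ``lying in a common block'' to treat a single edge: suppose $d_X(x,x')=\delta\leq r$. Then $\phi\colon Z(\delta)\to (X,d_X)$ with $\phi(p)=x$ and $\phi(q)=x'$ is isometric onto its image, hence distance non-increasing, so it is a morphism of $\underline{\mathcal M}^{gen}$. Functoriality makes $\Psi(\phi)\colon\Psi(Z(\delta))\to\Psi(X,d_X)$ persistence preserving with underlying set map $\phi$. By condition (II), for $r\geq\delta$ the partition $\theta^{Z(\delta)}(r)$ is the single block $\{p,q\}$, and persistence-preservation forces this block to sit inside a block of $\phi^*(\theta^{\Psi}(r))$, i.e. $x$ and $x'$ lie in a common block of $\theta^{\Psi}(r)$. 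Chaining the edges of a $\sim_r$-path shows each $\sim_r$-class is contained in a single block of $\theta^{\Psi}(r)$, which is the claimed refinement.

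For the reverse inclusion, that $\theta^{\Psi}(r)$ refines $\theta^{d_X}(r)$, let $\mathcal B_1,\dots,\mathcal B_k$ be the blocks of $\theta^{d_X}(r)$; if $k=1$ there is nothing to prove, so assume $k\geq 2$. The key observation is that points in distinct single-linkage blocks lie at distance strictly greater than $r$ (otherwise they would be $\sim_r$-connected), whence $s:=\min_{i\neq j}\min_{x\in\mathcal B_i,\,x'\in\mathcal B_j}d_X(x,x')>r$. I will build the auxiliary space $(W,d_W)$ with $W=\{w_1,\dots,w_k\}$ carrying the uniform metric $d_W(w_i,w_j)=s$ for $i\neq j$, together with the collapsing map $q\colon X\to W$, $q(x)=w_i$ for $x\in\mathcal B_i$. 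Since $s$ is the global minimum inter-block distance, $q$ is distance non-increasing, hence a morphism of $\underline{\mathcal M}^{gen}$. Writing $\Psi(W,d_W)=(W,\theta^{\Psi,W})$, we have $\mathrm{sep}(W)=s>r$, so condition (III) makes $\theta^{\Psi,W}(r)$ the discrete partition of $W$; pulling back along $q$ gives $q^*(\theta^{\Psi,W}(r))=\{\mathcal B_1,\dots,\mathcal B_k\}=\theta^{d_X}(r)$. Functoriality makes $\Psi(q)$ persistence preserving, i.e. $\theta^{\Psi}(r)$ refines $q^*(\theta^{\Psi,W}(r))=\theta^{d_X}(r)$, as desired. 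Combining the two inclusions yields $\theta^{\Psi}(r)=\theta^{d_X}(r)$ and finishes the proof.

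I expect the main obstacle to be this reverse inclusion, specifically the choice of auxiliary target realizing the single-linkage collapse while having separation large enough to trigger condition (III). The uniform metric on the set of blocks is the cleanest device; the one point that must be checked carefully is that the global minimum inter-block distance $s$ genuinely exceeds $r$, which is exactly the statement that two distinct $\sim_r$-clusters cannot contain a pair of points within distance $r$. The remaining verifications — that $\phi$ and $q$ are distance non-increasing and that persistence-preservation unwinds into the stated refinements — are routine.
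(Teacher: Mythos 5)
Your proof is correct, and it follows the paper's overall architecture: prove object-level agreement via the two refinement inclusions, then note that morphism-level agreement is formal since a morphism in $\underline{\mathcal P}$ is just a persistence-preserving set map and condition (I) pins down the underlying set map. Direction (a) is essentially the paper's argument verbatim (the paper maps $Z(r)$ into $(X,d_X)$ where you map $Z(\delta)$ with $\delta=d_X(x,x')\leq r$; the difference is immaterial). The genuine divergence is in the reverse inclusion: the paper forms the quotient set $X[r]$ of $\sim_r$-classes and endows it with the maximal metric $d_{[r]}$ pointwise below $W(\mathcal B,\mathcal B')=\min_{x\in\mathcal B}\min_{x'\in\mathcal B'}d_X(x,x')$ (a path-length-type construction, the analogue of $\mathcal L(W)$ in Section 5), then checks $\mathrm{sep}(X[r])>r$; you instead put the uniform (equilateral) metric at the global minimum inter-block distance $s>r$ on the set of blocks. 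Both choices make the collapse map distance non-increasing and trigger condition (III), and both pull the discrete partition back to exactly $\theta^{d_X}(r)$, so both are valid. Your variant is more elementary --- the triangle inequality for the uniform metric is trivial and no maximal-metric construction needs to be defined or verified --- whereas the paper's quotient metric is the more canonical object, reappearing later as $\varepsilon_X$ and $\mathcal L(W)$ in the stability results of Section 5. A further small point in your favor: your justification of the morphism-level step is more accurate than the paper's, which asserts that between any pair of objects of $\underline{\mathcal P}$ there is at most one morphism (false as literally stated); the correct observation, which you make, is that a morphism of persistent sets is determined by its source, target, and underlying set map.
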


\begin{proof}
Let $\Psi(X,d_X) = (X,\theta^{\Psi})$. For each $r\geq 0$ we will
prove that \textbf{(a)} $\theta^{d_X}(r)$ is a refinement of $\theta^{\Psi}(r)$
and \textbf{(b)} $\theta^{\Psi}(r)$ is a refinement of $\theta^{d_X}(r)$.

Then it will follow that $\theta^{d_X}(r)=\theta^{\Psi}(r)$ for all
$r\geq 0$, which shows that the objects are the same.  Since this is a situation where, given any pair of objects, there is at most one  morphism    between them,  this also determines the effect of the functor on morphisms.  


Fix $r\geq 0$. In order to obtain (a) we need to prove that whenever
$x,x'\in X$ lie in the same block of the partition $\theta^{d_X}(r)$,
that is $x\sim_r x'$, then they both lie in the same block of
$\theta^{\Psi}(r)$.

\smallskip
It is enough to prove the following
\underline{\textbf{Claim}}:  whenever\\
 $d_X(x,x')\leq r$ then $x$ and $x'$ lie in the same block of $\theta^{\Psi}(r)$.

\smallskip
Indeed, if the claim is true, and $x\sim_r x'$ then one can find
$x_0,x_1,\ldots,x_n$ with $x_0=x$, $x_n=x'$ and $d_X(x_i,x_{i+1})\leq
r$ for $i=0,1,2,\ldots,n-1$. Then, invoking the claim for all pairs
$(x_i,x_{i+1})$, $i=0,\ldots,n-1$ one would find that: ${x}=x_0$ and
$x_1$ lie in the same block of $\theta^{\Psi}(r)$, $x_1$ and
$x_2$ lie in the same block of $\theta^{\Psi}(r)$, $\ldots$, $x_{n-1}$
and $x_n=x'$ lie in the same block of $\theta^{\Psi}(r)$. Hence, $x$
and $x'$ lie in the same block of $\theta^{\Psi}(r)$.

So, let's prove the claim. Assume $d_X(x,x')\leq r$, then the function
given by $p \rightarrow x, q \rightarrow x^{\prime}$ is a morphism
$g:Z(r) \rightarrow ( X, d_X) $ in $\underline{{\mathcal M}}^{gen}$.
This means that we obtain a morphism $$\Psi (g) : \Psi (Z(r))
\rightarrow \Psi (X,d_X)$$ in $\underline{{\mathcal P}}$.  But, $p$ and
$q$ lie in the same block of the partition $\theta ^{Z(r)}$ by
definition of $Z(r)$, and functoriality therefore guarantees that
$\Psi(g)$ is persistence preserving (recall Example \ref{example:P})
and hence the elements $g(p) = x$ and $g(q) = x^{{\prime}}$ lie in the
same block of $ \theta ^{d_X} (r)$. This concludes the proof of (a).

\smallskip
For condition (b), assume that $x$ and $x'$ belong to the same block
of the partition $\theta^{\Psi}(r)$. We will prove that necessarily
$x\sim_r x'$. This of course will imply that $x$ and $x'$ belong to the
same block of $\theta^{d_X}(r)$.

\smallskip
Consider the metric space $(X[r],d_{[r]})$ whose points are the
equivalence classes of $X$ under the equivalence relation $\sim _r$,
and where the metric $d_{[r]}:X[r]\times X[r]\rightarrow \R^+$ 
 is defined to be the maximal metric pointwisely less than or equal to $W$, where for two points ${\mathcal B}$ and ${\mathcal B}' $ in $X[r]$ (equivalence classes of $X$ under $\sim_r$),
 $W({\mathcal B},{\mathcal B}') = \min_{x
\in {\mathcal B}}\min _{x^{\prime} \in {\mathcal B}'} d_X
(x,x^{\prime}).$\footnote{See Section \ref{sec:stab} for a similar,
explicit construction.} It follows from the definition of $\sim _r$
that if two equivalence classes are distinct, then the distance
between them is $> r$. This means that $\mbox{sep}(X[r])>r.$

Write $\Psi (X[r],d_{[r]})=(X[r],\theta^{[r]})$. Since
$\mbox{sep}(X[r])>r$, hypothesis (III) now directly shows that the
blocks of the partition $\theta^{[r]}(r)$ are exactly the equivalence
classes of $X$ under the equivalence relation $\sim _r$, that is
$\theta^{[r]}(r) = \theta^{d_X}(r)$. Finally, consider the morphism
$$\pi_r: (X,d_X)
\rightarrow (X[r],d_{[r]})$$ in $\underline{{\mathcal M}}^{gen}$ given on elements $x
\in X$ by $\pi_r (x) =[x]_r$, where $[x]_r$ denotes the equivalence class of
$x$ under $\sim_r$. By functoriality, $\Psi(\pi_r):(X,\theta^{\Psi})\rightarrow (X[r],\theta^{[r]})$ is
persistence preserving, and therefore, $\theta^{\Psi}(r)$ is a
refinement of $\theta^{[r]}(r)=\theta^{d_X}(r)$. This is depicted as follows: 
\begin{displaymath}
\xymatrix{
(X,d_X) \ar[r]^\Psi \ar[d]_{\pi_r} & (X,\theta^{d_X}) \ar[d]^{\Psi(\pi_r)} \\
(X[r],d_{[r]}) \ar[r]^{\Psi} & (X[r],\theta_{[r]}) }
\end{displaymath}

This concludes the proof of (b).
\end{proof}

\medskip
We should point out that another characterization of single linkage
has been obtained in the book \cite{math-taxo}.

\subsubsection{Comments on Kleinberg's conditions}\label{sec:klein-conds}
\noindent We conclude this section by observing that analogues of  the three (axiomatic) 
properties considered by Kleinberg in \cite{kleinberg} hold for ${\mathcal
R}^{gen}$.

	 Kleinberg's \underline{first condition} was {\em scale-invariance}, which
	 asserted that if the distances in the underlying point cloud
	 data were multiplied by a constant positive multiple
	 $\lambda$, then the resulting clustering decomposition should
	 be identical.  In our case, this is replaced by the condition
	 that ${\mathcal R}^{gen}\compcirc\sigma _{\lambda}(X,d_X) =
	 s_{\lambda}\compcirc{\mathcal R}^{gen}(X,d_X)$, which is
	 trivially satisfied.

	 Kleinberg's \underline{second condition}, {\em richness}, asserts that
	 any partition of a dataset can be obtained as the result of
	 the given clustering scheme for some metric on the
	 dataset. In our context, partitions are replaced by
	 persistent sets. Assume that there exist $t\in \Rr$
	 s.t. $\theta(t)$ is the single block partition, i.e., impose
	 that the persistent set is a dendrogram (cf. Definition
	 \ref{def:persistent-sets}).  In this case, it is easy to
	 check that any such persistent set can be obtained as ${\mathcal
	 R}^{gen}$ evaluated for some (pseudo)metric on some
	 dataset. Indeed,\footnote{We only prove triangle inequality.}
	 let $(X,\theta)\in{ob}(\underline{\mathcal P})$.  Let
	 $\epsilon_1,\ldots,\epsilon_k$ be the (finitely many)
	 transition/discontinuity points of $\theta$. For $x,x'\in X$
	 define $d_X(x,x')=\min\{\epsilon_i\}$ s.t. $x,x'$ belong to
	 same block of $\theta(\epsilon_i)$.  
	
	 This is a pseudo metric on $X$. Indeed, pick points $x,x'$
	 and $x''$ in $X$. Let $\epsilon_1$ and $\epsilon_2$ be
	 minimal s.t. $x,x'$ belong to the same block of
	 $\theta(\epsilon_1)$ and $x',x''$ belong to the same block of
	 $\theta(\epsilon_2)$. Let
	 $\epsilon_{12}:=\max(\epsilon_1,\epsilon_2)$. Since
	 $(X,\theta)$ is a persistent set (Definition
	 \ref{def:persistent-sets}), $\theta(\epsilon_{12})$ must have
	 a block $\mathcal B$ s.t. $x,x'$ and $x''$ all lie in
	 $\mathcal B$. Hence $d_X(x,x'')\leq \epsilon_{12}\leq
	 \epsilon_1+\epsilon_2=d_X(x,x')+d_X(x',x'')$.

	 Finally, Kleinberg's \underline{third condition},
	 \emph{consistency}, could be viewed as a rudimentary example
	 of functoriality. His morphisms are similar to the ones in
	 $\underline{\mathcal M}^{gen}$.

\subsection{Functoriality over $\underline{{\mathcal M}}^{mon}$}
In this section, we illustrate how relaxing the functoriality permits
more clustering algorithms.  In other words, we will restrict ourselves
to $\underline{{\mathcal M}}^{mon}$ which is smaller (less stringent) than
$\underline{{\mathcal M}}^{gen}$ but larger (more stringent) than
$\underline{{\mathcal M}}^{iso}$.  We consider the restriction of ${\mathcal
R}^{gen}$ to the category $\underline{{\mathcal M}}^{mon}$.  For any
metric space and every value of the persistence parameter $r$, we will
obtain a partition of the underlying set $X$ of the metric space in
question, and the set of equivalence classes under $\sim _r$.  For any
$x \in X$, let $[x]_r$ be the equivalence class of $x$ under the
equivalence relation $\sim _r$, and define $c(x) = \# [x]_r$.  For any
integer $m$, we now define $X_m \subseteq X$ by $X_m = \{ x \in X|
c(x) \geq m\} $.  We note that for any morphism $f: X \rightarrow Y$
in $\underline{{\mathcal M}}^{mon}$, we find that $f(X_m) \subseteq Y_m$.
This property clearly does not hold for more general morphisms.  For
every $r$, we can now define a new equivalence relation $\sim ^m_r $
on $X$, which refines $\sim _r$, by requiring that each equivalence
class of $\sim _r$ which has cardinality $\geq m$ is an equivalence
class of $\sim _r^m$, and that for any $x $ for which $c(x) < m$, $x$
defines a singleton equivalence class in $\sim ^m_r$. We now obtain a
new persistent set $(X,\theta ^m )$, where $\theta ^m (r)$ will denote
the partition associated to the equivalence relation $\sim ^m_r$.  It
is readily checked that $X \rightarrow (X, \theta ^m )$ is functorial
on $\underline{{\mathcal M}}^{mon}$.  This scheme could be motivated by
the intuitition that one does not regard clusters of small cardinality
as significant, and therefore makes points lying in small clusters
into singletons, where one can then remove them as representing
``outliers".

\section{Metric stability and convergence properties of ${\mathcal R}^{gen}$} \label{sec:stab}
In this section we briefly discuss some further properties of
${\mathcal R}^{gen}$ (single linkage dendrograms). We will provide
quantitative results on the
\emph{stability} and \emph{convergence/consitency} properties of this functor (algorithm). To the best of our knowledge, the only other related  results obtained for this algorithm appear in \cite{hartigan}. The issues of stability and convergence/consistency of clustering algorithms have brought back into attention recently, see \cite{ulrike_general,sober} and references therein. In Theorem \ref{theo:conv} besides proving stability, we prove convergence in a simple setting.

Given finite metric spaces $(X,d_X)$ and $(Y,d_Y)$, our goal is to
define a distance between the persistence objects $\theta^{d_X}$ and
$\theta^{d_Y}$ respectively produced by ${\mathcal R}^{gen}$. We know
that this functor actually outputs dendrograms (rooted trees), which
have a natural metric structure attached to them. Moroever, it is well
known that rooted trees are uniquely characterized by their distance
matrix,
\cite{book-trees}. 

For a finite metric space $(X,d_X)$ consider the derived metric space
$(X,\varepsilon_X)$ with the same underlying set and metric
\beq{def:varepsilon-X}
\varepsilon_X(x,x'):=\min\{\varepsilon\geq 0|\,x\sim_\varepsilon x'\}.
\eeq

Note that $(X,\theta^{d_X})$ can therefore obviously be regarded as
the metric space $(X,\varepsilon_X)$, cf. with the construction of the
metric in section \ref{sec:klein-conds}. We now check that indeed
$\varepsilon_X$ defines a metric on $X$.
\begin{Proposition}
	For any finite metric space $(X,d_X)$, $(X,\varepsilon_X)$ is
	also a metric space.
\end{Proposition}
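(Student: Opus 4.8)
The plan is to verify the three metric axioms for $\varepsilon_X$ directly from the defining relations $\sim_r$ of Example \ref{rips}, and in fact to observe that $\varepsilon_X$ satisfies the stronger \emph{ultrametric} inequality, from which the triangle inequality is immediate. The entire argument is a routine verification; the only genuine idea is that concatenation of connecting chains is the right tool for the triangle inequality.

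First I would check that $\varepsilon_X$ is well defined, i.e. that the minimum in its definition is attained and finite. Since $X$ is finite, any two points $x,x'$ are joined by the length-one chain $x,x'$, so $x\sim_\varepsilon x'$ already holds for $\varepsilon=d_X(x,x')$; thus the set $\{\varepsilon\geq 0\,|\,x\sim_\varepsilon x'\}$ is nonempty. Moreover, along any chain $x=x_0,\ldots,x_t=x'$ the quantity $\max_i d_X(x_i,x_{i+1})$ takes values in the finite set of pairwise distances of $X$, so the infimum of admissible $\varepsilon$ coincides with one such distance and is attained. Hence $\varepsilon_X(x,x')$ is a well-defined nonnegative real number.

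Next I would establish identity of indiscernibles and symmetry. Clearly $\varepsilon_X(x,x)=0$ via the trivial chain. Conversely, if $\varepsilon_X(x,x')=0$ then $x\sim_0 x'$, so there is a chain with $d_X(x_i,x_{i+1})\leq 0$ for each $i$; since $d_X$ is a metric this forces $x_i=x_{i+1}$ throughout, whence $x=x'$. Symmetry $\varepsilon_X(x,x')=\varepsilon_X(x',x)$ is immediate, since reversing a connecting chain from $x$ to $x'$ yields one from $x'$ to $x$ with the same maximal step.

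The key step is the triangle inequality, which I would obtain by concatenation. Set $a=\varepsilon_X(x,x')$ and $b=\varepsilon_X(x',x'')$, and choose a chain from $x$ to $x'$ with all steps $\leq a$ and a chain from $x'$ to $x''$ with all steps $\leq b$. Juxtaposing them at the common endpoint $x'$ produces a chain from $x$ to $x''$ all of whose steps are $\leq\max(a,b)$, so $x\sim_{\max(a,b)}x''$ and therefore $\varepsilon_X(x,x'')\leq\max(a,b)\leq a+b$. This proves the triangle inequality, and in fact the stronger ultrametric inequality $\varepsilon_X(x,x'')\leq\max(\varepsilon_X(x,x'),\varepsilon_X(x',x''))$, reflecting the well-known fact that single linkage yields an ultrametric. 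I do not expect any substantial obstacle; the only point requiring care is the well-definedness of the minimum, which the finiteness of $X$ settles.
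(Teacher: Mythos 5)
Your proof is correct and follows essentially the same route as the paper's: verify the first two axioms directly, then obtain the triangle inequality by concatenating a chain from $x$ to $x'$ with one from $x'$ to $x''$, noting that all steps of the concatenated chain are bounded by $\max(\varepsilon_X(x,x'),\varepsilon_X(x',x''))$. Your additional remarks on well-definedness of the minimum and on the ultrametric inequality are correct refinements, but the core argument is the same as in the paper, which also passes through $\max(\varepsilon_1,\varepsilon_2)\leq\varepsilon_1+\varepsilon_2$.
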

\begin{proof}
(a) Since $d_X$ is a metric on $X$, it is obvious that
$\varepsilon_X(x,x')=0$ implies $x=x'$.  (b) Symmetry is also obvious
since $\sim_\varepsilon$ is an equivalence relation.  (c) Triangle
inequality: Pick $x,x',x''\in X$.  Let
$\varepsilon_X(x,x')=\varepsilon_1$ and
$\varepsilon_X(x',x'')=\varepsilon_2$. Then, there exist points
$a_0,a_1,\ldots,a_j$ and $b_0,b_1,\ldots,b_k$ in $X$ with $a_0=x$,
$a_j=x'=b_0$, $b_k=x''$ and $d_X(a_i,a_{i+1})\leq \varepsilon_1$ for
$i=0,\ldots,j-1$ and $d_X(b_i,b_{i+1})\leq \varepsilon_2$ for
$i=0,\ldots,k-1$. Consider the points $
\{c_i\}_{i=0}^{j+k+1}=\{a_0,\ldots,a_j,b_1,\ldots,b_k\}.$\\
 Then $d_X(c_i,c_{i+1})\leq \max(\varepsilon_1,\varepsilon_2)\leq
\varepsilon_1+\varepsilon_2$. Hence \mbox{$x\sim_{\varepsilon_{12}} x''$} with $\varepsilon_{12}=\varepsilon_1+\varepsilon_2$ and then by definition  (\ref{def:varepsilon-X}), $\varepsilon_X(x,x'')\leq \varepsilon_X(x,x')+\varepsilon_X(x',x'').$
\end{proof}

In order to compare the outputs of ${\mathcal R}^{gen}$ on two
different finite metric spaces $(X,d_X)$ and $(X',d_{X'})$ we will
instead compare the metric space representations of those outputs,
$(X,\varepsilon_X)$ and $(X',\varepsilon_{X'})$, respectively. For
this purpose, we choose to work with the Gromov-Hausdorff distance
which we define now,
\cite{burago-book}.

\begin{Definition}[Correspondence]
For sets $A$ and $B$, a subset $R\subset A\times B$ is a
\emph{correspondence} (between $A$ and $B$) if and and only if
\begin{itemize}
\item $\forall$ $a\in A$, there exists $b\in B$ s.t. $(a,b)\in R$
\item $\forall$ $b\in B$, there exists $a\in X$ s.t. $(a,b)\in R$
\end{itemize}
\end{Definition}

Let ${\mathcal R}(A,B)$ denote the set of all possible correspondences
between sets $A$ and $B$.

Consider finite metric spaces $(X,d_X)$ and $(Y,d_Y)$. Let
$\Gamma_{X,Y}:X\times Y \bigtimes X\times Y\rightarrow \R^+$ be given
by $$(x,y,x',y')\mapsto |d_X(x,x')-d_Y(y,y')|.$$ Then, the
\textbf{Gromov-Hausdorff} distance between $X$ and $Y$ is given by
\beq{dgh}
	\dgro{X}{Y} := \inf_{R\in{\mathcal
	R}(X,Y)}\,\,\max_{(x,y),(x',y')\in R}\Gamma_{X,Y}(x,y,x',y')
\eeq
\begin{Remark}
This expression defines a \textbf{metric} on the set of (isometry
classes of) finite metric spaces, \cite{burago-book} (Theorem
7.3.30). 
\end{Remark}

One has:
\begin{Proposition}\label{prop:varep}
For any finite metric spaces $(X,d_X)$ and $(Y,d_Y)$
$$\dgro{(X,d_X)}{(Y,d_Y)}\geq
\dgro{(X,\varepsilon_X)}{(Y,\varepsilon_Y)}.$$
\end{Proposition}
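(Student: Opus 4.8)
The plan is to exploit the fact that the Gromov--Hausdorff distance is an infimum over correspondences, together with the observation that the derived spaces $(X,\varepsilon_X)$ and $(Y,\varepsilon_Y)$ share the same underlying sets as $(X,d_X)$ and $(Y,d_Y)$. Thus every correspondence $R\in{\mathcal R}(X,Y)$ is simultaneously a correspondence for the original spaces and for the derived spaces, and since membership in ${\mathcal R}(X,Y)$ depends only on the underlying sets, it suffices to prove the per-correspondence inequality
\[
\max_{(x,y),(x',y')\in R}|\varepsilon_X(x,x')-\varepsilon_Y(y,y')| \leq \max_{(x,y),(x',y')\in R}|d_X(x,x')-d_Y(y,y')|
\]
for each fixed $R$. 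A pointwise inequality between the two distortion functionals passes to their infima over $R$, and those infima are exactly $\dgro{(X,\varepsilon_X)}{(Y,\varepsilon_Y)}$ and $\dgro{(X,d_X)}{(Y,d_Y)}$ respectively, which gives the claim.

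First I would fix a correspondence $R$ and set $\eta := \max_{(x,y),(x',y')\in R}|d_X(x,x')-d_Y(y,y')|$, so that $|d_X(a,a')-d_Y(b,b')|\leq\eta$ for every pair $(a,b),(a',b')\in R$. The heart of the argument is a chain-lifting lemma: given $(x,y),(x',y')\in R$, I would show $\varepsilon_X(x,x')\leq\varepsilon_Y(y,y')+\eta$. To do this, write $s:=\varepsilon_Y(y,y')$ and choose, by definition (\ref{def:varepsilon-X}) of $\varepsilon_Y$, a chain $y=y_0,y_1,\ldots,y_k=y'$ in $Y$ with $d_Y(y_i,y_{i+1})\leq s$ for all $i$. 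Using that $R$ is a correspondence, pick $x_i\in X$ with $(x_i,y_i)\in R$, taking $x_0=x$ and $x_k=x'$ (permissible since $(x,y),(x',y')\in R$). For consecutive indices the pairs $(x_i,y_i),(x_{i+1},y_{i+1})$ lie in $R$, so $d_X(x_i,x_{i+1})\leq d_Y(y_i,y_{i+1})+\eta\leq s+\eta$; hence $x=x_0,\ldots,x_k=x'$ witnesses $x\sim_{s+\eta}x'$, giving $\varepsilon_X(x,x')\leq s+\eta$. Exchanging the roles of $X$ and $Y$ gives $\varepsilon_Y(y,y')\leq\varepsilon_X(x,x')+\eta$, and the two bounds combine to $|\varepsilon_X(x,x')-\varepsilon_Y(y,y')|\leq\eta$. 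Since $(x,y),(x',y')\in R$ were arbitrary, this proves the per-correspondence inequality.

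The step I expect to require the most care is the chain lifting, specifically the fact that $R$ is a correspondence rather than a map: each interior vertex $y_i$ may have several $R$-partners and none is canonical, so one simply makes an arbitrary choice of $x_i$ for the interior vertices while pinning the endpoints $x_0=x$, $x_k=x'$ using the hypothesis that the endpoint pairs already belong to $R$. Once this is set up, the only estimate invoked is the definition of $\eta$ applied to consecutive lifted pairs, and the symmetric argument (lifting a $d_X$-chain back through $R$ into $Y$) is identical with the roles reversed. No metric property beyond the defining chain characterisation of $\varepsilon_X$ and $\varepsilon_Y$ in (\ref{def:varepsilon-X}) is needed.
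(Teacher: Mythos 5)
Your proof is correct and follows essentially the same approach as the paper's: the identical chain-lifting argument through a correspondence $R$, yielding $|\varepsilon_X(x,x')-\varepsilon_Y(y,y')|\leq\eta$ for all $(x,y),(x',y')\in R$. The only (cosmetic) difference is that you prove the distortion inequality for every correspondence and then pass to the infimum, whereas the paper fixes a correspondence realizing $\dgro{(X,d_X)}{(Y,d_Y)}$, which is legitimate since finite spaces admit only finitely many correspondences.
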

\begin{proof}
Let $\eta=\dgro{(X,d_X)}{(Y,d_Y)}$ and $R\in{\mathcal R}(X,Y)$
s.t. $|d_X(x,x')-d_Y(y,y')|\leq \eta$ for all $(x,y),(x',y')\in
R$. Fix $(x,y)$ and $(x',y')\in R$. Let $x_0,\ldots,x_m\in X$ be s.t.
$x_0=x$, $x_m=x'$ and $d_X(x_i,x_{i+1})\leq
\varepsilon(x,x')$ for all $i=0,\ldots,m-1$.
Let $y=y_0,y_1,\ldots,y_{m-1},y_m=y'\in Y$ be s.t. $(x_i,y_i)\in R$
for all $i=0,\ldots,m$ (this is possible by definition of $R$). Then,
$d_Y(y_i,y_{i+1})\leq d_X(x_i,x_{i+1})+\eta\leq
\varepsilon_X(x,x')+\eta$ for all $i=0,\ldots,m-1$
and hence $\varepsilon_Y(y,y')\leq
\varepsilon_X(x,x')+\eta$. By exchanging the roles of $X$ and $Y$ one obtains the inequality $\varepsilon_X(x,x')\leq
\varepsilon_Y(y,y')+\eta$. This means $|\varepsilon_X(x,x')-\varepsilon_Y(y,y')|\leq \eta$. Since $(x,y),(x',y')\in R$ are arbitrary, and upon recalling the definition of the Gromov-Hausdorff distance  we obtain the desired conclusion.
\end{proof}

Proposition \ref{prop:varep} will allow us to quantify
\emph{stability} and
\emph{convergence}. We provide deterministic arguments. The same
construction, essentially, yields similar results under the assumption
that $(Z,d_Z)$ is enriched with a (Borel) probability measure and one
takes i.i.d. samples w.r.t. this probability measure. Assume $(Z,d_Z)$
is an underlying (perhaps ``continuous'') metric space from which
different finite samples are drawn. We would like to see,
quantitatively, (1) how the results yielded by ${\mathcal R}^{gen}$
differ when applied to those different sample sets (which possibly
contain different numbers of points), this is \emph{stability} and (2)
that when the underlying metric space is partitioned, there is
\emph{convergence} and \emph{consistency} in a precise sense.

Assume $A$ is a finite set and let $W:A\times A\rightarrow \R^+$ be a
symmetric map. Using the usual path-length construction, we endow $A$
with the (pseudo)metric
$$d_A(a,a'):=\min\sum_{k=0}^{m-1}W(a_{k},a_{k+1})$$ where the minimum
is taken over $m$ and all sets of $m+1$ points $a_0,\ldots,a_m$ such
that $a_0=a$ and $a_m=a'$. We denote $d_A = {\mathcal L}(W)$. This is
a standard construction, see \cite{bridson} \S1.24.

For a compact metric space $(Z,d_Z)$ and any two of its compact
subsets $Z_1,Z_2$ let $$D_Z(Z_1,Z_2)=\min_{z_1\in Z_1}\min_{z_2\in
Z_2}d_Z(z_1,z_2).$$

For $(Z,d_Z)$ compact and $X,X'\subset Z$ compact, let $d_{\mathcal
H}^Z(X,X')$ denote the Hausdorff distance (in $Z$) between $X$ and
$X'$,
\cite{burago-book}. For any $X\subset Z$ let $R(X):=d_{\mathcal
H}^Z(X,Z)$. Intuitively this number measures how well $X$ approximates
$Z$. One says that $X$ is an $R(X)$-\emph{covering} of $Z$ or an
$R(X)$-net of $Z$.

The following theorem summarizes our main results regarding metric
stability and convergence/consistency. The situation described by the theorem is
depicted in Figure \ref{fig:theoconv}.

\begin{Theorem}\label{theo:conv}
Assume $(Z,d_Z)$ is a compact metric space. Let $X$ and $X'$ be any
two finite sets of points sampled from $Z$.  Endow these two sets with
the (restricted) metric $d_Z$. Then,

\begin{enumerate}
 	
	\item (Finite Stability) $\dgro{(X,\varepsilon_X)}{(X',\varepsilon_{X'})}\leq
	2(R(X)+R(X')).$

	\item (Asymptotic Stability) As $\max\left(R(X),R(X')\right)\rightarrow 0$ one has
	$\dgro{(X,\varepsilon_X)}{(X',\varepsilon_{X'})}\rightarrow
	0.$

	\item (Convergence/consistency) Assume in addition that
	$Z=\cup_{\alpha\in A}Z_\alpha$ where $A$ is a finite index set
	and $Z_\alpha$ are compact, disjoint and path-connected
	sets. Let $(A,d_A)$ be the finite metric space with underlying
	set $A$ and metric given by $d_A:={\mathcal L}(W)$ where
	$W(\alpha,\alpha'):=D_Z(Z_{\alpha},Z_{\alpha'})$ for
	$\alpha,\alpha'\in A$.\footnote{Since the $Z_\alpha$ are
	disjoint, $d_A$ is a true metric on $A$.}  Then, as
	$R(X)\rightarrow 0$ one has
	$$\dgro{(X,\varepsilon_X)}{(A,\varepsilon_A)}\rightarrow 0.$$

\end{enumerate}
\end{Theorem}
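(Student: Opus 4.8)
The three parts are proved in order of increasing difficulty: parts (1) and (2) are quick consequences of Proposition \ref{prop:varep}, while part (3) carries the real content. For (1) the plan is to factor the comparison through the original metric. Since $X,X'\subset Z$ carry the restricted metric $d_Z$, Proposition \ref{prop:varep} gives $\dgro{(X,\varepsilon_X)}{(X',\varepsilon_{X'})}\leq \dgro{(X,d_Z)}{(X',d_Z)}$, so it suffices to bound the right-hand side by $2(R(X)+R(X'))$. I would construct a correspondence $R^\ast\subseteq X\times X'$ by assigning to each $x\in X$ a nearest point of $X'$ (within $R(X')$, since $X'$ is an $R(X')$-net of $Z\ni x$) and to each $x'\in X'$ a nearest point of $X$ (within $R(X)$), so every pair $(x,x')\in R^\ast$ satisfies $d_Z(x,x')\leq\max(R(X),R(X'))$. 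The triangle inequality then yields $|d_Z(x_1,x_2)-d_Z(x_1',x_2')|\leq d_Z(x_1,x_1')+d_Z(x_2,x_2')\leq 2(R(X)+R(X'))$ for any two pairs, bounding the distortion of $R^\ast$ and hence the Gromov--Hausdorff distance. Part (2) is then immediate, as letting $\max(R(X),R(X'))\to 0$ forces the bound in (1) to zero.

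For (3) the key geometric input is uniform separation of the components: set $\delta_0:=\min_{\alpha\neq\alpha'}D_Z(Z_\alpha,Z_{\alpha'})$, which is positive because $A$ is finite and the $Z_\alpha$ are disjoint and compact. I would work with $R(X)<\delta_0$. Under this assumption each $x\in X$ lies in a unique $Z_\alpha$ (write $X_\alpha:=X\cap Z_\alpha$), and the nearest net point to any $z\in Z_\alpha$ must itself lie in $Z_\alpha$, so every $X_\alpha$ is a nonempty $R(X)$-net of $Z_\alpha$. Before comparing metrics I would record a combinatorial lemma: the single-linkage metric $\varepsilon_A$ derived from $d_A=\mathcal L(W)$ coincides with the minimax metric $m_W(\alpha,\alpha')$ obtained directly from $W$ by minimizing $\max_i W(\gamma_i,\gamma_{i+1})$ over paths $\alpha=\gamma_0,\ldots,\gamma_k=\alpha'$ in $A$. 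This holds since $m_W\leq d_A\leq W$ pointwise and refining a minimax $d_A$-chain into $W$-optimal subpaths shows $m_W=m_{d_A}=\varepsilon_A$.

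The heart of the argument is a two-sided comparison of $\varepsilon_X$ and $\varepsilon_A$ across the correspondence $S:=\{(x,\alpha):x\in X_\alpha\}$, which is a genuine correspondence once $R(X)<\delta_0$. For two points in the same component, $x,x'\in X_\alpha$, I would exploit path-connectedness of $Z_\alpha$: choosing a continuous path from $x$ to $x'$, subdividing it by uniform continuity into steps of size $\leq R(X)$, and replacing each subdivision point by a nearest point of $X_\alpha$ produces a chain in $X$ with consecutive gaps $\leq 3R(X)$, so $\varepsilon_X(x,x')\leq 3R(X)$ while $\varepsilon_A(\alpha,\alpha)=0$. For $x\in X_\alpha$, $x'\in X_{\alpha'}$ with $\alpha\neq\alpha'$, the lower bound $\varepsilon_X(x,x')\geq\varepsilon_A(\alpha,\alpha')$ comes from reading off the sequence of components visited by an optimal $X$-chain: each inter-component step has length $\geq W$ of the corresponding edge, and the induced path in $A$ has minimax value $\geq m_W=\varepsilon_A$. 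The matching upper bound $\varepsilon_X(x,x')\leq\varepsilon_A(\alpha,\alpha')+2R(X)$ comes from realizing a minimax-optimal $W$-path in $A$, linking consecutive components by within-component chains (gaps $\leq 3R(X)$) and by near-optimal inter-component jumps (gaps $\leq W+2R(X)$); since every inter-component weight is $\geq\delta_0>3R(X)$, the maximal gap is $\varepsilon_A(\alpha,\alpha')+2R(X)$.

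Combining these estimates, every pair in $S$ satisfies $|\varepsilon_X(x,x')-\varepsilon_A(\alpha,\alpha')|\leq 3R(X)$, so the distortion of $S$ is at most $3R(X)$ and $\dgro{(X,\varepsilon_X)}{(A,\varepsilon_A)}\leq 3R(X)\to 0$. I expect the main obstacle to be part (3), and within it the two-sided between-component estimate: the lower bound requires carefully projecting an optimal chain in $X$ onto a path in $A$, while the upper bound must simultaneously control the within-component chaining (which is exactly where path-connectedness and compactness of the $Z_\alpha$ enter, through uniform continuity) and verify that the small intra-component gaps never exceed the genuine inter-component jumps, a fact guaranteed precisely by the separation constant $\delta_0$.
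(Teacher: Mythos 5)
Your proposal is correct and takes essentially the same route as the paper's proof: part (1) by combining Proposition \ref{prop:varep} with a net comparison of $X,X'$ against $Z$, and part (3) by using the component-assignment correspondence $x\mapsto\alpha(x)$ together with the two-sided estimate $\varepsilon_A(\alpha(x),\alpha(x'))\leq\varepsilon_X(x,x')\leq\varepsilon_A(\alpha(x),\alpha(x'))+O(R(X))$, proved by chaining within path-connected components and lifting (minimax-)optimal paths in $A$ back to chains in $X$. The differences are only organizational: you extract the identity $\varepsilon_A=m_W$ as a separate lemma where the paper performs the same refinement of $d_A$-chains into $W$-paths inline, and your within-component bound $3R(X)$ (fixed subdivision scale $R(X)$) is slightly coarser than the paper's $2R(X)$ (subdivision scale $\epsilon_0\rightarrow 0$), which is immaterial for the limit.
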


\begin{figure}[htb]
	\centering
	\includegraphics[width=1\linewidth]{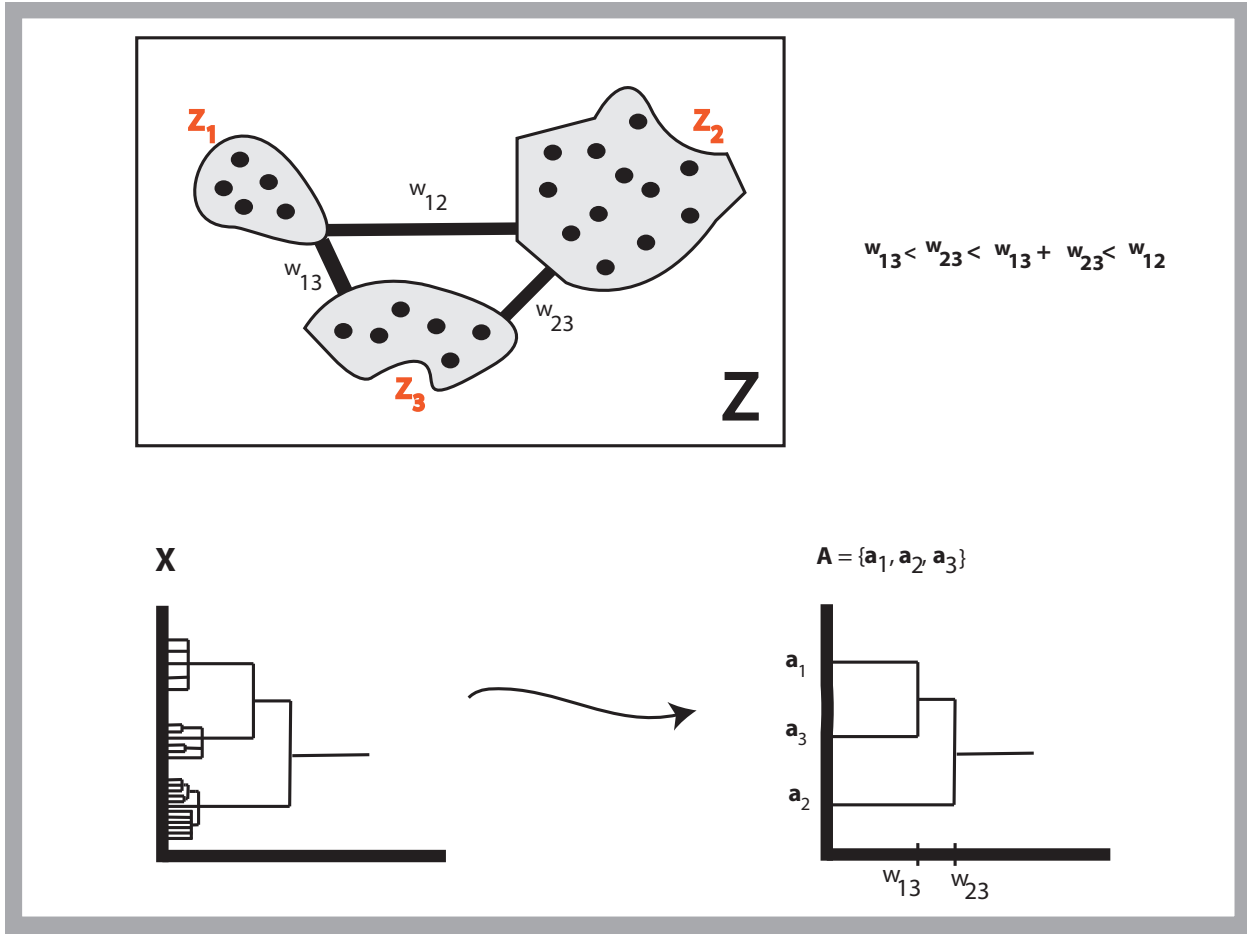}
	\caption{Explanation of Theorem \ref{theo:conv}. \emph{Top}: A
	space $Z$ composed of $3$ disjoint path connected parts, $Z_1,
	Z_2$ and $Z_3$. The black dots are the points in the finite
	sample $X$. In the figure, $w_{ij} = D_Z(Z_i,Z_j)$, $1\leq
	i\neq j\leq 3$. \emph{Bottom Left}: The dendrogram
	representation of $(X,\theta^{d_X})$. \emph{Bottom Right} The
	dendrogram representation of the persistent set
	$(A,\theta^{d_A})$. Note that $d_A(a_1,a_2)=w_{13}+w_{23}$,
	$d_A(a_1,a_3)=w_{13}$ and $d_A(a_2,a_3)=w_{23}$. As
	$R(X)\rightarrow 0$, $(X,\theta^{d_X})\rightarrow
	(A,\theta^{d_A})$ in the Gromov-Hausdorff sense, see text for
	details.} \label{fig:theoconv}
\end{figure}
\begin{proof}
Let $\delta>0$ be s.t.  $\min_{\alpha\neq
\beta}D_Z(Z_\alpha,Z_\beta)\geq \delta.$

Claim 1. follows from Proposition \ref{prop:varep}: let $d_X$
(resp. $d_{X'}$) equal the restriction of $d_Z$ to $X\times X$
(resp. $X'\times X'$). Then, by the triangle inequality for the
Gromov-Hausdorff distance $$\dgro{X}{Z} +
\dgro{X'}{Z}\geq
\dgro{(X,\varepsilon_X))}{(X',\varepsilon_{X'}))}.$$
Now, the claim follows from the fact that whenever $Z\subset Z'$,
$\dgro{Z'}{Z}\leq 2d_{\mathcal H}^{Z}(Z,Z')=2R(Z')$, \cite{burago-book},
\S7.3.

Claim 2. follows directly from claim 1.  

We now prove the third claim. For each $x\in X$ let $\alpha(x)$ denote
the index of the path connected component of $Z$ s.t. $x\in
Z_{\alpha(x)}$.  Assume, $R(X)<
\frac{\delta}{2}$. Then, it is clear that $\#\left(Z_\alpha\cap X\right)\geq 1$ for all $\alpha\in A$.  
Then it follows that $R=\{(x,\alpha(x))|x\in X\}$ belongs to
${\mathcal R}(X,A)$.  We prove below that for all
$x,x'\in X$ $$\varepsilon_A(\alpha(x),\alpha(x')) \stackrel{(1)}{\leq}
\varepsilon_X(x,x') \stackrel{(2)}{\leq}
\varepsilon_A(\alpha(x),\alpha(x')) + 2R(X).$$

It follows immediately from the definition of $W$ that for all
$y,y'\in X$, $W(\alpha(y),\alpha(y'))\leq d_X(y,y')$. From the
definition of $d_A$ it follows that $W(\alpha,\alpha')\geq
d_A(\alpha,\alpha')$. Then in order to prove (1) pick $x_0,\ldots,x_m$
in $X$ with $x_0=x$, $x_m=x'$ and $d_X(x_i,x_{i+1})\leq
\varepsilon_X(x,x')$. Consider the points in $A$ given by
$\alpha(x)=\alpha(x_0),\ldots, \alpha(x_m)=\alpha(x')$. Then,
$d_A(\alpha(x_i),\alpha(x_{i+1}))\leq W(\alpha(x_i),\alpha(x_{i+1}))\leq d_X(x_i,x_{i+1})\leq
\varepsilon_X(x,x')$ for $i=0,\ldots,m-1$ by the claim above. Hence
(1) follows.

We now prove (2). Assume first that $\alpha(x)=\alpha(x')=\alpha$.
Fix $\epsilon_0>0$ small. Let $\gamma:[0,1]\rightarrow Z_\alpha$ be a
continuous path s.t. $\gamma(0)=x$ and $\gamma(1)=x'$. Let
$z_1,\ldots,z_m$ be points on $\mbox{image}(\gamma)$ s.t. $z_0 = x$,
$z_m = x'$ and $d_X(z_i,z_{i+1}) \leq {\epsilon_0}$,
$i=0,\ldots,m-1$. By hypothesis, one can find
$x=x_0,x_1,\ldots,x_{m-1},x_m=x'$ s.t. $d_Z(x_i,z_i)\leq R(X)$. Hence
$d_X(x_i,x_{i+1})\leq {\epsilon_0}+2R(X)$ and hence
$\varepsilon_X(x,x')\leq {\epsilon_0}+2R(X)$. Let
$\epsilon_0\rightarrow 0$ to obtain the desired result.

Now if $\alpha=\alpha(x)\neq \alpha(x')=\beta$, let
$\alpha_0,\alpha_1,\ldots,\alpha_l\in A$ be s.t. $\alpha_0=\alpha(x)$,
$\alpha_l=\alpha(x')$ and $d_A(\alpha_j,\alpha_{j+1})\leq
\varepsilon_A(\alpha(x),\alpha(x'))$ for $j=0,\ldots,l-1$.

By definition of $d_A$, for each $j=0,\ldots,l-1$ one can find a path
${\mathcal C}_j=\{\alpha_j^{(0)}, \ldots, \alpha_j^{(r_j)}\}$
s.t. $\alpha_j^{(0)}=\alpha_j$, $\alpha_{j}^{(r_j)}=\alpha_{j+1}$ and
$\sum_{i=0}^{r_j-1}W(\alpha_{j}^{(i)},\alpha_{j}^{(i+1)})=d_A(\alpha_j,\alpha_{j+1})\leq
\varepsilon_A(\alpha,\beta).$ It follows that
$W(\alpha_{j}^{(i)},\alpha_{j}^{(i+1)})\leq
\varepsilon_A(\alpha,\beta)$ for $i=0,\ldots,r_{j}-1$. Consider the path 
${\mathcal C}=\{\widehat{\alpha}_0,\ldots, \widehat{\alpha}_s\}$ in
$A$ joining $\alpha$ to $\beta$ given by the concatenation of all the
${\mathcal C}_j$. By eliminating repeated consecutive elements in
${\mathcal C}$ if necessary, one can assume that
$\widehat{\alpha}_i\neq \widehat{\alpha}_{i+1}$. By construction
$W(\widehat{\alpha}_i,\widehat{\alpha}_{i+1})\leq
\varepsilon_A(\alpha,\beta)$ and $\widehat{\alpha}_0=\alpha$, $\widehat{\alpha}_s=\beta$.
We will now lift $\mathcal C$ into a path in $Z$ joining $x$ to $x'$.

Note that by compactness, for all $\nu,\mu\in A$, $\nu\neq
\mu$ there exist $z_{\nu,\mu}^\nu\in Z_\nu$ and $z_{\nu,\mu}^\mu\in Z_\mu$ s.t. 
$W(\nu,\mu)=d_Z(z_{\nu,\mu}^\nu,z_{\nu,\mu}^\mu)$. Consider the path
$\mathcal G$ in $Z$ given by $${\mathcal
G}=\{x,z_{\widehat{\alpha}_0,\widehat{\alpha}_1}^{\widehat{\alpha}_0},z_{\widehat{\alpha}_0,\widehat{\alpha}_1}^{\widehat{\alpha}_1},\ldots,z_{\widehat{\alpha}_{s-1},\widehat{\alpha}_{s}}^{\widehat{\alpha}_s},x'\}.$$
For each point $g\in \mathcal G$ pick a point $x(g)\in X$
s.t. $d_Z(g,x(g))\leq R(X)$. This is possible by definition of
$R(X)$. Let ${\mathcal G}' = \{x_0,x_1,\ldots,x_t\}$ be the resulting
path in $X$.  Notice that if $\alpha(x_t)\neq\alpha(x_{t+1})$ then
$d_X(x_t,x_{t+1})\leq 2R(X)+W(\alpha(x_t),\alpha(X_{t+1}))$ by
the triangle inequality. Also, by construction,$(*)\hspace{.5in}W(\alpha(x_t),\alpha(x_{t+1}))\leq
\varepsilon_A(\alpha,\beta).$

 Now, we claim that $$\varepsilon_X(x,x')\leq
\max_{t}W(\alpha(x_t),\alpha(x_{t+1}))+2R(X).$$
This claim will follow from the simple observation that
$\varepsilon_X(x,x')\leq \max_t \varepsilon_X(x_t,x_{t+1}).$ If
$\alpha(x_t)=\alpha(x_{t+1})$ we already proved that
$\varepsilon_X(x_t,x_{t+1})\leq 2R(X)$. If on the other hand
$\alpha(x_t)\neq \alpha(x_{t+1})$ then,
$\varepsilon_X(x_t,x_{t+1})\leq 2R(X)+W(\alpha(x_t),\alpha(x_{t+1}))$
and hence the claim. Combine this fact with $(*)$ to conclude the
proof of (2). Putting (1) and (2) together we have
$$\dgro{(X,\varepsilon_X)}{(A,\varepsilon_A)}\leq 2R(X)$$ and the
conclusion follows by letting $R(X)\rightarrow 0$.

\end{proof}

\section{Functoriality and bootstrap  clustering}\label{sec:zigzags}\label{bootstrap}
In the previous section, we have observed that by encoding the output
of a clustering scheme as diagram (i.e. as a persistent set or
dendrogram) allows one to assess stability of the clustering obtained
from the scheme.  In this section, we will demonstrate that another
use of functoriality can be used to assess stability of clustering
schemes whose output is simply a partition of the underlying point
cloud.  We begin by recalling the basics of the {\em bootstrap} method
developed by B. Efron \cite{efron}.  The bootstrap considers a set of
point cloud data $\Bbb{X}$, and repeatedly samples (with replacement)
collections of (say) $n$ elements from $\Bbb{X}$.  For each sample,
one measures of central tendency such as means, medians, variances,
are computed, and the distribution of these measures as a statistic
are studied.  It is understood that such computations are more
informative than the measures computed a single time on the full set
$\Bbb{X}$.  We wish to perform a similar analysis for clustering.  The
difficulty is that the output of clustering is not a single numerical
statistic, but is rather a structural, qualitative output. We will now
show how functoriality can be used to assess compatibilty of
clusterings of subsamples, and thereby obtain a method for confirming
that clustering is a significant feature of the data rather than an
artifact.

In the context of clustering these bootstrapping ideas arise when
dealing with massive datasets: one is forced to analysing several
smaller, more manageable random subsamples of the original data to
produce partial pictures of the underlying clustering structure. The
problem then is how to agglomerate all this information together.

In this section, for us, a clustering scheme will denote any rule $C$
which assigns to every finite metric space $S$ a partition ${\mathcal
P}_{C}(S)$. We write ${\mathcal B}_C(S)$ for the set of blocks of the
partition ${\mathcal P}_C(S)$.  If we are given two finite metric spaces
$S,T$, an \underline{embedding} from $S$ to $T$, is an injective set
map $\iota: S
\hookrightarrow T$, so that $d_T(\iota (x), \iota (x^{\prime})) =
d_S(x,x^{\prime})$.  Given any partition ${\mathcal P}$ of a metric space
$T$, and given any set map $\varphi : S \rightarrow T$, we write
$\varphi ^*({\mathcal P})$ for the partition of $S$ which places
$s,s^{\prime} \in S$ in the same block if and only if $\varphi (s) $
and $\varphi (s^{\prime}) $ lie in the same block of ${\mathcal P}$.  The
clustering scheme $C$ is now said to be {\bf I-functorial } if ${\mathcal
P}_C(S) $ refines $\iota ^* ({\mathcal P}_C(T))$ for any embedding $\iota:
S \rightarrow T$.  Note that for any I-functorial clustering scheme
$C$, there is an induced map ${\mathcal B}_C(\iota): {\mathcal B}_C(S)
\rightarrow {\mathcal B}_C(T)$ for any embedding $\iota : S
\hookrightarrow T$.  An example of an I-functorial clustering scheme is single linkage clustering for a fixed threshhold $\epsilon$.  


Now let $\Bbb{X}$ be a set of point cloud data, equipped with a metric
$d$.  We build collections of samples $S_i \subseteq \Bbb{X}$ of size
$n $ from $\Bbb{X}$, { with replacement}, for $1 \leq i \leq N$.  We
assume we are given an I-functorial clustering scheme $C$.  We note
that each of the samples $S_i$ and the sets $S_i \cup S_{i+1}$ are
finite metric spaces in their own right, that the natural inclusions
$S_i \hookrightarrow S_i \cup S_{i+1}$ and $S_{i+1} \hookrightarrow
S_i \cup S_{i+1}$ are embeddings of finite metric spaces.  It follows
from the I-functoriality of the clustering scheme $C$ that we obtain a
diagram of sets in Figure \ref{diagram-g}

	\begin{figure}[htb] \centering
	\includegraphics[width=1\linewidth]{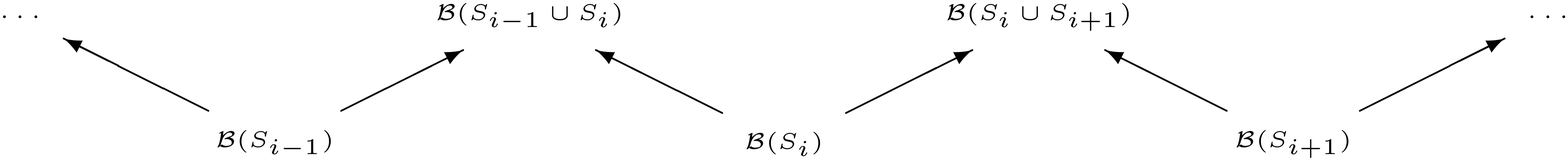}
	\caption{Diagram of sets obtained via I-functoriality of the
	clustering scheme.}  \label{diagram-g} \end{figure}


We will refer to such a diagram as a {\em zig zag}.  In an intuitive
sense, this diagram now carries information about the stability or the
significance of the clustering.  The informal idea is that sequences
of the form $\{ x_{\nu} \}_{\nu = s}^{t}$ (formed by consecutive
elements), with $x_{\nu} \in {\mathcal B}_C(S_{\nu})$, and with ${\mathcal
B}_C(i^+_{\nu})(x_{\nu}) ={\mathcal B}_C(i^-_{\nu +1})(x_{\nu + 1})$
should describe small scale pictures of a clustering of $\Bbb{X}$,
where $i^+_{\nu}: S_{\nu}
\hookrightarrow S_{\nu} \cup S_{\nu +1 }$ and $i^-_{\nu +1 }: S_{\nu
+1 } \hookrightarrow S_{\nu} \cup S_{\nu +1 }$ are the inclusions.
Informally, the idea is that ``compatible families" of clusterings of
the samples $S_{\nu}$ should correspond to clusterings of the entire
set $\Bbb{X}$.  Of course, the length of the sequence ($t-s+1$) must
be significant.  A single pair of compatible clusters will not be as
significant as a long sequence.  The problem with this idea as stated
is that it is very hard to make precise the definitions of the
sequences, and to describe them.

Unlike the case of ordinary persistent sets, where dendrograms provide
a straightforward visualization of all such structure, we believe that
in the case of zig-zags of sets no such simple representation is
possible. However, there turns out to be (see below) a readily
computable analogue of the \emph{persistence barcode},
\cite{ghrist,persistence}. We now see how this works.

We note first that this situation has certain things in common with
dendrograms.  Rooted trees can be viewed as diagrams of sets of the
form $$ X_0 \stackrel{f_0}{\rightarrow} X_1
\stackrel{f_1}{\rightarrow} \cdots X_{n-1}
\stackrel{f_{n-1}}{\rightarrow} X_{n} \rightarrow \cdots$$ for which
there is an integer $N$ so that $X_k$ consists of one element for all
$k \geq N$.  The smallest such $N$ will be called the depth of the
tree, $d$.  One constructs a tree from such a diagram by forming the
disjoint union $\coprod_{i=0}^{d} X_i \times [0,1]$, and then forms
the quotient by the equivalence relation generated by the equivalences
$x \times 1 \simeq f_s(x) \times 0$ for all $x \in X_s$ and $s < d$.
The set $X_l$ will now correspond to the nodes of depth $d-l$ in the
tree.  The tree representation turns out to be a useful representation
of structure of the sets of clusters as a set varying with a
threshhold parameter.  Given instead a zig zag diagram as above, it is
again possible to construct a graph which represents the data, but it
is harder to make useful sense of it, since it is a fairly general
graph.  Nonetheless, it turns out that it is possible to obtain a
useful partial description using algebraic techniques.

One begins with a field $k$ (typically $\Bbb{F}_2$, the field with two
elements), and constructs for each of the sets ${\mathcal B}(S_i )$ and
${\mathcal B}(S_{i} \cup S_{i +1})$ in the zig zag the corresponding
vector spaces $k[{\mathcal B}(S_i )]$ and $k[{\mathcal B}(S_{i} \cup S_{i
+1})]$, i.e. vector spaces with the given sets as bases.  The zig zag
diagram now gives rise to a diagram of vector spaces and linear
transformations of the same shape.  It turns out that there is an
algebraic classification of such diagrams up to isomorphism.  To
describe this classification, we will describe every zig zag diagram
as a family of vector spaces $\{ V_i \}_{i }$, equipped with linear
transformations $\lambda _i : V_{2i} \rightarrow V_{2i + 1}$ and $\mu
_i : V_{2i} \rightarrow V_{2i -1 }$. Given integers $a \leq b$, we
denote by $Z[a,b]$ the zig zag diagram for which $V_i = k$ for all $a
\leq i \leq b$, and $V_i = \{ 0 \} $ for $i \notin [a,b]$, and for
which every possible non-zero linear transformation is equal to the
identity.  For example, $Z[3,6]$ is the diagram $$ \cdots \{ 0 \}
\rightarrow V_3 \stackrel{id}{\leftarrow} V_4
\stackrel{id}{\rightarrow} V_5 \stackrel{id}{\leftarrow} V_6
\rightarrow \{ 0 \} \cdots $$ where $V_3,V_4, V_5, V_6 = k$.  Note
that these diagrams are parametrized by closed intervals with integer
endpoints.

We now have the  following theorem of Gabriel (see \cite{gabriel}).
\begin{Theorem} Every zig zag diagram is isomorphic to a direct sum of diagrams of the form $[a_i, b_ i]$, and the decomposition is unique up to reordering of the summands.  
\end{Theorem}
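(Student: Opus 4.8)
The plan is to recognize a zig zag diagram as a finite-dimensional representation of a quiver whose underlying graph is a path — that is, a quiver of type $A$ — and then to separate the theorem into an existence statement and a uniqueness statement. Uniqueness I would extract from the Krull--Schmidt theorem. Each $Z[a,b]$ is indecomposable: an endomorphism is a choice of scalar at every vertex, and compatibility with the identity maps joining consecutive nonzero spaces forces all these scalars to coincide, so $\mathrm{End}(Z[a,b]) = k$, a local ring. Since each $V_i$ is finite-dimensional and only finitely many are nonzero, every zig zag representation has finite length, and Krull--Schmidt then guarantees that any two decompositions into indecomposables agree up to reordering and isomorphism of summands. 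Because $Z[a,b] \cong Z[a',b']$ forces $[a,b] = [a',b']$ (the interval is exactly the set of vertices $i$ with $\dim V_i = 1$), the multiset of intervals is a genuine isomorphism invariant. Thus the real content is the existence of a decomposition into interval modules.

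For existence I would argue by induction on the number of vertices in the support of the diagram. When there is a single nonzero vertex, a choice of basis splits the space into one-dimensional pieces, each a copy of some $Z[a,a]$. For the inductive step I would delete an endpoint vertex, say $V_m$, together with its unique incident arrow, and apply the inductive hypothesis to the restricted diagram, writing it as a direct sum of interval modules. It then remains to glue $V_m$ and the terminal map back in. Only those interval summands reaching the vertex adjacent to $m$ interact with this map, and each contributes a one-dimensional space there, so the terminal map reduces to a single linear map between the span of those summands and $V_m$ (in one direction or the other, according to the orientation of the last arrow). Putting that map into row--column normal form lets me decide, summand by summand, whether the corresponding interval should be extended to cover $m$, left unchanged, or whether a fresh length-one interval $Z[m,m]$ is needed to absorb the part of $V_m$ that is not hit; reading off these summands completes the step.

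The step that requires genuine care — and where I expect the main obstacle — is the basis manipulation used to normalize the terminal map. I am only free to change the basis at the adjacent vertex through automorphisms of the already-decomposed restriction, and these do not permit arbitrary mixing of the one-dimensional pieces: combining the generator of one interval summand with that of another is legitimate only when a corresponding endomorphism of the restricted representation exists, which depends on the relative position and orientation of the two intervals. So the normal form must be reached while respecting the $\mathrm{Hom}$-constraints between interval modules, and checking that enough freedom survives to arrive at interval form is the crux. An alternative that bypasses this bookkeeping is to invoke the general Gabriel correspondence directly: the underlying graph is the Dynkin diagram $A_n$, which has finite representation type, and the indecomposables are in bijection with the positive roots of the $A_n$ root system; these positive roots are precisely the sums $\alpha_a + \cdots + \alpha_b$ of consecutive simple roots, matching the intervals $[a,b]$. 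Either route delivers the decomposition, and combined with the Krull--Schmidt uniqueness above it yields the theorem.
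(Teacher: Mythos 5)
Your proposal is correct, but the comparison here is lopsided: the paper does not prove this statement at all --- it simply attributes it to Gabriel and cites his work, so your second route (invoking the Gabriel correspondence for the Dynkin diagram $A_n$, whose positive roots $\alpha_a + \cdots + \alpha_b$ are exactly the intervals) is precisely what the paper does, with the small extra observation, worth making explicit, that an indecomposable with an interval dimension vector must have all its internal maps nonzero and hence is isomorphic to $Z[a,b]$ after rescaling bases. What you add beyond the paper is genuine content. The uniqueness half via Krull--Schmidt is complete and correct as you state it: $\mathrm{End}(Z[a,b]) \cong k$ is local because the identity structure maps force all the vertex scalars to agree regardless of arrow orientation, finite-dimensionality gives finite length, and the dimension vector pins down the interval, so the multiset of intervals is an isomorphism invariant. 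The existence half by induction on the support is the standard self-contained argument (it is essentially the proof used in the zigzag persistence literature), and you correctly identify its crux: after decomposing the restricted diagram, the admissible base changes at the vertex adjacent to the deleted endpoint are constrained by the $\mathrm{Hom}$ spaces between interval summands, which depend on their relative positions and the orientations of the connecting arrows, and one must verify that row--column reduction can be performed within those constraints. As written this step is a flagged gap rather than a proof, so your first route is a sketch; but since you explicitly close it by falling back on Gabriel, the proposal as a whole is sound. In short: the paper buys the theorem by citation; your argument buys uniqueness outright and reduces existence either to a standard (but nontrivial) combinatorial induction or to the same citation.
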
 

\section{Discussion} \label{various} 

We have presented the ideas of functoriality and persisence as useful
organizing principles for clustering algorithms.  We have made
particular choices of category structures on the collection of finite
metric spaces, as well as for the notion of multiscale/resolution
sets.  One can imagine different notions of morphisms of metric spaces
and of persistent sets.  For example, the idea of {\em
multidimensional persistence} (see \cite{multid}) could provide
methods which in addition to the parameter $r$ could track density as
estimated by some estimator, giving a more informative picture of the
dataset.  It also appears likely that from the point of view described
here, it will in many cases be possible, given a collection of
constraints on a clustering functor, to determine the {\em universal }
one satisfying the constraints.  One could therefore use sets of
constraints as the definition of clustering functors.

We believe that the conceptual framework presented here can be a
useful tool in reasoning about clustering algorithms.  We have also
shown that clustering methods which have some degree of functoriality
admit the possibility of certain kind of qualitative geometric
analysis of datasets which can be quite valuable.  The general idea
that the morphisms between mathematical objects (together with the
notion of functoriality) are critical in many situations is
well-established in many areas of mathematics, and we would argue that
it is valuable in this statistical situation as well.

We have also discussed how to obtain quantitative stability,
consistency and convergence results using a metric space
representation of the output of clustering algorithms. We believe
these tools can also contribute to the understanding of theoretical
questions about clustering as well.

Finally we would like to comment on the fact that functoriality ideas
and metric based study complement eachother. In the sense that using
functoriality, first, one can reason about global stability or
rigidity of methods in order to identify a class of them that is
sensible, and then, by applying metric tools one can understand the
behaviour/convergence as, say, the number of samples goes to infinity,
or to the quantify error in approximating the underlying reality when
only finitely many samples are used.

\bibliography{clustering_biblio}
\bibliographystyle{alpha}
\end{document}